\newcommand{\R}{\mathbb{R}} 
\newcommand{\E}{\mathbb{E}}
\newcommand{\N}{\mathbb{N}}
\newcommand{\calM}{\mathcal{M}}
\newcommand{\calA}{\mathcal{A}}
\newcommand{\calX}{\mathcal{X}}
\newcommand{\calH}{\mathcal{H}}
\newcommand{\calN}{\mathcal{N}}
\newcommand{\calE}{\mathcal{E}}
\newcommand{\calF}{\mathcal{F}}
\newcommand{\calK}{\mathcal{K}}
\renewcommand{\vec}{{\rm vec}}
\newtheorem{theorem}{Theorem}[section]
\newtheorem{assumption}[theorem]{Assumption}
\newtheorem{lemma}{Lemma}[section]
\theoremstyle{remark}
\newtheorem{remark}{Remark}[section]
\newtheorem{conjecture*}{Conjecture}
\theoremstyle{plain}
\title{Neural Spectral Marked Point Processes}
\author{Shixiang Zhu$^\dagger$, Haoyun Wang$^\dagger$, Zheng Dong$^\dagger$, Xiuyuan Cheng$^\star$, Yao Xie$^\dagger$
\\
$\dagger$ Georgia Institute of Technology\\
$\star$ Duke University\\
\texttt{yao.xie@isye.gatech.edu}\\
}
\begin{document}

\maketitle

\begin{abstract}
Self- and mutually-exciting point processes are popular models in machine learning and statistics for dependent discrete event data. To date, most existing models assume stationary kernels (including the classical Hawkes processes) and simple parametric models. Modern applications with complex event data require more general point process models that can incorporate contextual information of the events, called marks, besides the temporal and location information. Moreover, such applications often require non-stationary models to capture more complex spatio-temporal dependence. To tackle these challenges, a key question is to devise a versatile influence kernel in the point process model. In this paper, we introduce a novel and general neural network-based non-stationary influence kernel with high expressiveness for handling complex discrete events data while providing theoretical performance guarantees. We demonstrate the superior performance of our proposed method compared with the state-of-the-art on synthetic and real data.
\end{abstract}

\vspace{-.1in}
\section{Introduction}
\vspace{-.05in}

Event sequence data are ubiquitous in our daily life, ranging from traffic incidents, 911 calls, social media posts, earthquake catalog data, and COVID-19 data (see, e.g., \cite{bertozzi2020challenges}). Such data consist of a sequence of events indicating when and where each event occurred, with additional descriptive information (called {\it marks}) about the event (such as category, volume, or free-text). The distribution of events is of scientific and practical interest, both for prediction purposes and for inferring events' underlying generative mechanism.

A popular framework for modeling events is point processes \citep{Daley2008}, which can be continuous over time and the space of marks. 
An important aspect of this model is capturing the event's triggering effect on its subsequent events. 
Since the distribution of point processes is completely specified by the conditional intensity function (the occurrence rate of events conditioning on the history), such triggering effect has been captured by an influence kernel function embedded in the conditional intensity. 
In statistical literature, the kernel function usually assumes a parametric form. For example, the original work by Hawkes \citep{Hawkes1971} considers an exponential decaying influence function over time, and the seminal work \citep{ogata1998space} introduces epidemic-type aftershock sequence (ETAS) model, which considers an influence function that exponentially decays over space and time. With the increasing complexity of modern applications, there has been much recent effort in developing recurrent neural network (RNN)-based point processes, leveraging the rich representation power of RNNs \citep{Du2016, Mei2017, Xiao2017A}. 

However, there are several limitations of existing RNN-based models. First, such models typically do not consider the kernel function \citep{Du2016, Li2018, Mei2017, Upadhyay2018, Xiao2017B, Xiao2017A}; thus, the RNN approach does not enjoy the interpretability of the kernel function based models. 
Second, the popular RNN models such as Long Short-Term Memory (LSTM) \citep{Hochreiter1997} still implicitly discounts the influence of events over time (due to their recursive structure) \citep{Vaswani2017, pmlr-v130-zhu21b}. 
Such assumptions may not hold in many real-world applications. Take the earthquake catalog as an example, which is a typical type of discrete event data; most aftershocks occur along the fault plane or other faults within the volume affected by the mainshock's strain \citep{zhu2021imitation}. 
This means that different regions may be correlated to their surrounding area differently according to their geological structure, which creates a complex non-stationary spatial profile that we would like to capture through the model. 
Third, a majority of the existing works mainly focus on one-dimensional temporal point processes. Although there are works on marked point processes \citep{Du2016, Mei2017, Reinhart2017}, they are primarily based on simplifying assumptions that the marks are conditionally independent of the event's time and location, which is equivalent to assuming the kernel is separable;
these assumptions may fail to capture some complex non-stationary, time- and location-dependent triggering effects for various types of events, as observed for many real-world applications (see, e.g., \citep{bertozzi2020challenges}).



\vspace{-.1in}
\paragraph{Contribution.} In this paper, we present a novel general non-stationary point process model based on neural networks, referred to as the neural spectral marked point process (NSMPP). The key component is a new powerful representation of the kernel function using neural networks, which enables us to go beyond stationarity (and thus go beyond standard Hawkes processes) and has the capacity to model high-dimensional marks. 
Figure~\ref{fig:motivating-exp} gives an example of non-stationary influence kernel that measures the influence of the past events to the future time $t$.
The premise of the model design is that the conditional intensity function uniquely specifies the distribution of the point process, and the most important component in the intensity function is the influence kernel. 
In summary, the novelty of our approach includes the following:
\begin{itemize}[nosep, wide, leftmargin=*]
\item The kernel function is represented by a spectral decomposition of the influence kernel with a finite-rank truncation in practice. Such a kernel representation will enable us to capture the most general non-stationary process as well as high-dimensional marks. The model also allows the distribution of marks to depend on time, which is drastically different from the separable kernels considered in the existing literature \citep{Reinhart2017}. 

\item The spectral decomposition of asymmetric influence kernel consists of a sum of the product of feature maps, which can be parameterized by neural networks. This enable us to harvest the powerful expressiveness and scalability to high-dimensional input of neural networks for complicated tasks involving discrete events data.


\item We establish theoretical guarantees of the maximum likelihood estimate for the true kernel function based on functional variational analysis and finite-dimensional asymptotic analysis, which shed light on theoretical understanding of neural network-based kernel functions.

\item Using synthetic and real data (seismic and police data), we demonstrate the superior performance of our proposed method in complex situations; the performance gain is particularly outstanding for cases involving non-stationary point processes.

\end{itemize}

\begin{figure}[!t]
\vspace{-.15in}
\centering
\begin{subfigure}[h]{.32\linewidth}
\includegraphics[width=\linewidth]{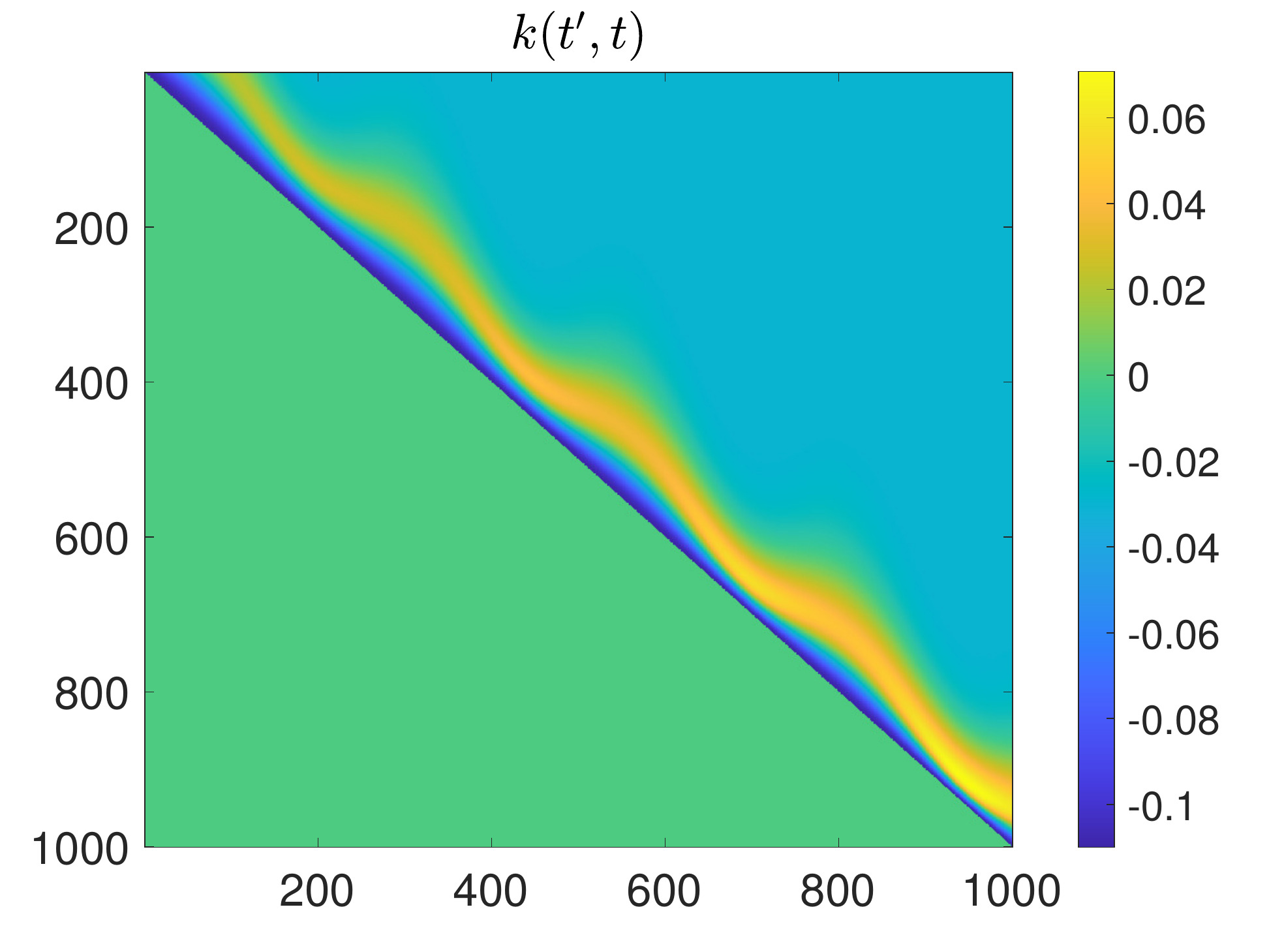}
\end{subfigure}
\begin{subfigure}[h]{.325\linewidth}
\includegraphics[width=\linewidth]{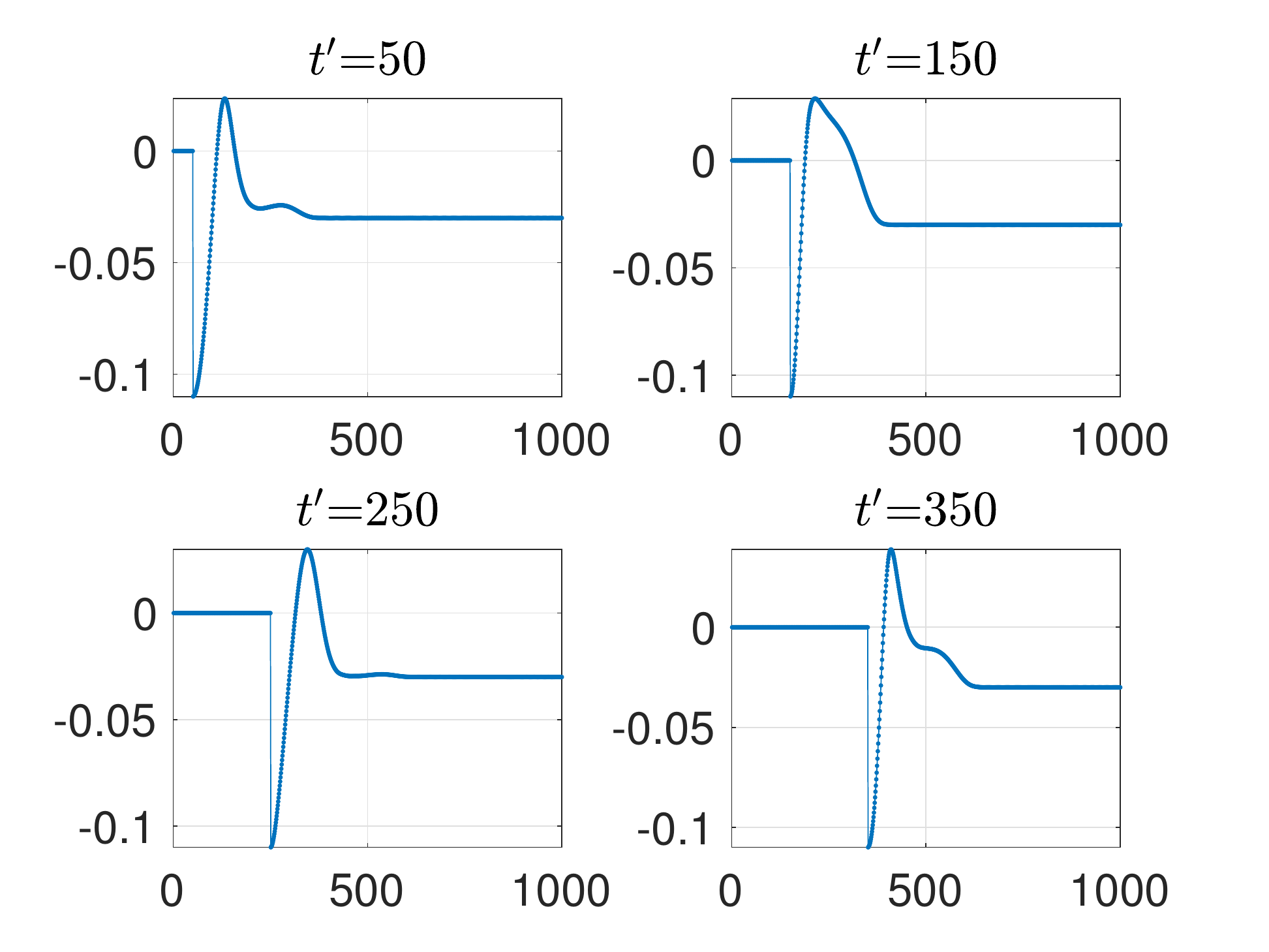}
\end{subfigure}
\vspace{-.1in}
\caption{
An example of non-stationary influence kernel $k(t',t)$ of event time $t'$ and future time $t > t'$.
}
\label{fig:motivating-exp}
\vspace{-.15in}
\end{figure}

\vspace{-.1in}
\paragraph{Related work.}
Seminal works in point processes modeling \citep{ogata1988statistical, ogata1998space} assume parametric forms of the intensity functions. Such methods enjoy good interpretability and are efficient to estimate. However, classical parametric models are not expressive enough to capture the events' dynamics in modern applications. 

Recent research interests aim to improve the expressive power of point process models, where a recurrent neural networks (RNNs)-based structure is introduced to represent the conditional intensity function \citep{Du2016, Mei2017, Xiao2017A}. However, most of these works either explicitly or implicitly specify the inter-event dependence in a limited form with restrained representative power. For example, \cite{Du2016} expresses the influence of two consecutive events in a form of $\exp \{ w (t_{i+1} - t_{i})\}$, which is an exponential function with respect to the length of the time interval $t_{i+1} - t_{i}$ with weights $w$;
\cite{Mei2017} enhances the expressiveness of the model and represents the entire history using the hidden state of an LSTM, which still implicitly assumes the influence of the history decays over time due to the recurrent structure of LSTM.  

Another line of research uses neural networks to directly model dependence of sequential events without specifying the conditional intensity function explicitly \citep{Li2018, Xiao2017B}.
Some studies consider non-stationary influence kernel using neural networks in spatio-temporal point processes \citep{zhu2021imitation, zhu2021spatio}. 
Recent work \citep{Omi2019} also uses a neural network to parameterize the hazard function, the derivative of which gives the conditional intensity function. 
However, the above approaches either capture the temporal dependence or assume the temporal and mark dependence are separable rather than jointly accounting for marked-temporal dependence.


Recently, attention models have become popular in computer vision and sequential data modeling \citep{britz2017massive, luong2015effective, Vaswani2017}. 
This motivates works including \cite{Zhang2019, zhu2021sequential, pmlr-v130-zhu21b, zuo2020transformer} to model the conditional intensity of point processes using the attention mechanism and characterize the inter-event dependence by a score function.
The attention mechanism has proven to be more flexible in capturing long-range dependence regardless of how far apart two events are separated and greatly enhances the performance in practice. However, the main limitation of \cite{Zhang2019, zuo2020transformer} is that they rely on a conventional score function -- dot-product between linear mappings of events, which is still limited in representing non-linear dependence between events for some applications. \cite{zhu2019sequential, zhu2021sequential, pmlr-v130-zhu21b} used a more flexible and general Fourier kernel as a substitution for the dot-product score; 
however, the expressive power of the proposed Fourier kernel is still limited, and the spectrum of the Fourier basis is represented by a generative neural network, which is difficult to learn in some cases \citep{arjovsky2017towards}.

There are also works considering point processes with non-stationary intensities. \cite{Chen2013} proposed time-varying background intensity for point process, while we focus on the non-stationary triggering kernel depicting complex events dependency. \cite{remes2017non,remes2018neural} studied non-stationary kernels combined with Gaussian processes, assuming specific structures of the kernels in the Fourier domain. Such kernels are more restricted than ours since the nature of Gaussian processes requires that the kernel is positive semidefinite.

\vspace{-.05in}
\section{Method}
\label{sec:method}
\vspace{-.05in}

\vspace{-.05in}
\subsection{Background: Marked temporal point process}
\vspace{-.05in}

Marked temporal point processes (MTPPs) \citep{Reinhart2017} consist of a sequence of events over time. Each event is associated with a (possibly multi-dimensional) {\it mark} that contains detailed information of the event, such as location, nodal information (if the observations are over networks, such as sensor or social networks)
categorical data, and contextual information (such as token, image, and text descriptions). Let  $T > 0$ be a fixed time-horizon, and $\calM \subseteq \R^d$ be the space of marks. We denote the space of observation as $\calX = [0,T)\times \calM$ and a data point in the discrete event sequence as
\begin{equation}
x = (t, m), \quad t \in [0, T) , \quad m \in \mathcal M,
\end{equation}
where $t$ is the event time and $m$ represents the mark. Let $N_t$ be the number of events up to time $t < T$ (which is random), and $\calH_t := \{x_1 , x_2 , \dots, x_{N_t}\}$ denote historical events.
%
Let $\mathbb{N}$ be the counting measure on $\mathcal X$, i.e., for any measurable $S\subseteq \calX$, $\N(S) = |\calH_T\cap S|.$ For any function $f:\calX\to\R$, the integral with respect to the counting measure is defined as 
$$
\int_{S} f(x) d\mathbb N(x) = \sum_{x_i\in\mathcal H_T\cap S} f(x_i).
$$
The events' distribution in MTPPs can be characterized via the {\it conditional intensity function} $\lambda(x)$, which is defined to be the conditional probability of observing an event in the marked temporal space $\calX$ given the events' history $\mathcal{H}_{t(x)}$, that is,
\begin{equation}
    \mathbb{E}\left( d\N(x) | \mathcal{H}_{t(x)} \right) = \lambda(x)  dx.
    \label{eq:def-conditional-intensity}
\end{equation}
Above,
$t(x)$ extracts the occurrence time of event $x$, and we omit the dependence on $\calH_{t(x)}$ in the notation of $\lambda(x)$ for simplicity.

As self- and mutual-exciting point processes, Hawkes processes \citep{Hawkes1971} have been widely used to capture the mutual excitation dynamics among temporal events.  The model assumes that influences from past events are linearly additive towards the current event.  
The conditional intensity function for a self-exciting point process takes the form of
\begin{equation}
\lambda[k](x) = \mu + \sum_{x'\in\calH_{t(x)}}k(x', x), \label{kernel_def}
\end{equation}
where $\mu >0$ stands for the background intensity, and  the so-called ``influence kernel'' $k:\calX\times\calX\to \R$ is crucial in capturing the influence of past events on the likelihood of event occurrence at the current time. Here we use the notation $[k]$ to stress the dependence of the conditional intensity function on the kernel function $k(x',x)$. Written in the form of the integral over counting measure, we have that
\begin{equation}
\lambda[k](x) = \mu + \int_{x'\in\calX_{t(x)}}k(x',x)d\N(x'), \label{eq:kernel_intensity}
\end{equation}
where $\calX_t$ is the subset of $\calX$ with the first component smaller than $t$.

The most commonly made assumption in the literature is that the process is stationary,
where the influence of the past events is shift-invariant, such that $k(x', x) = f(x - x')$ for a influence function $f:\R^d\to \R^+$; a common influence function in one-dimensional cases is $f(t) = \alpha \exp \{ - \beta t \}$, where $\beta$ controls the decay rate and $\alpha >0$ controls the magnitude of the influence of an event. 
The current work aims at going beyond stationary point processes,
which enables us to better capture the heterogeneity in the events' influence across the spatial-temporal space, which naturally arises in many applications.


\vspace{-.05in}
\subsection{Neural spectral representation for influence kernel}
\label{sec:proposed-model}
\vspace{-.05in}

We propose to represent more general non-stationary influence kernels in the conditional intensity function $\lambda(x)$ specified in \eqref{eq:kernel_intensity}.


\begin{figure}[!b]
\vspace{-.15in}
\centering
\begin{subfigure}[h]{.58\linewidth}
\includegraphics[width=\linewidth]{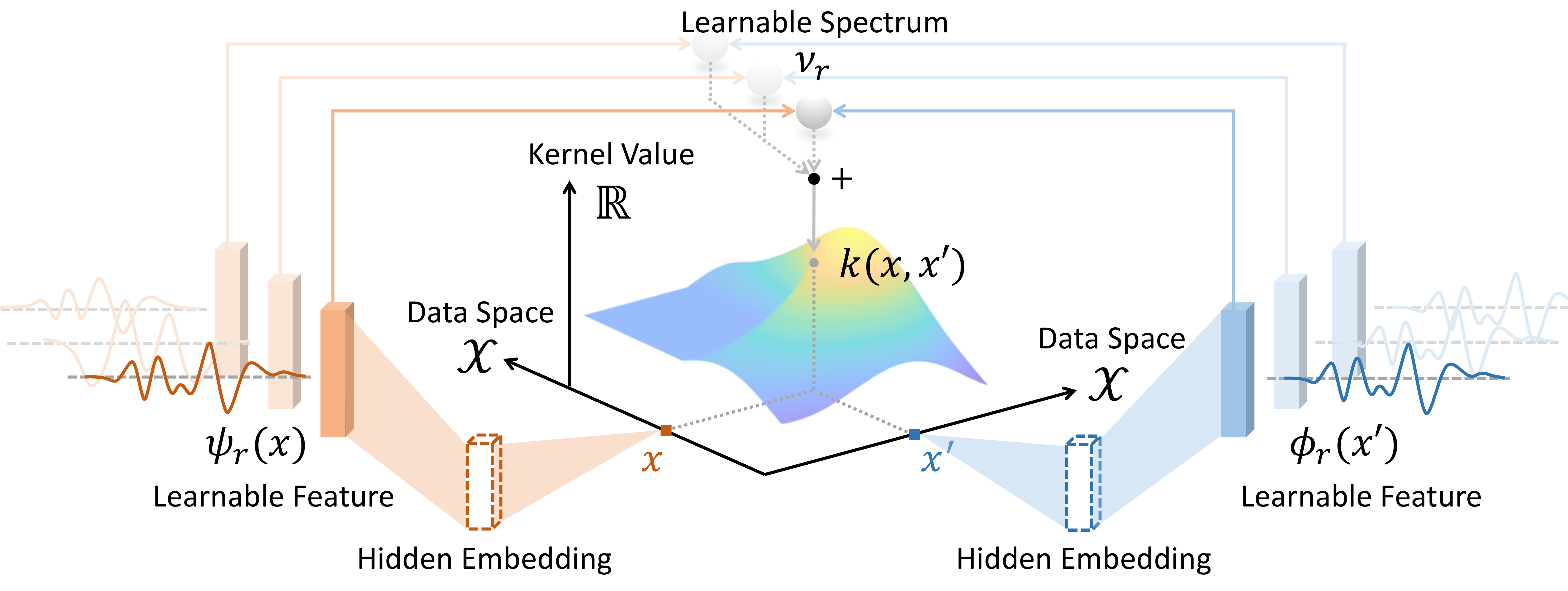}
\caption{Kernel architecture}
\end{subfigure}
\begin{subfigure}[h]{.4\linewidth}
\includegraphics[width=\linewidth]{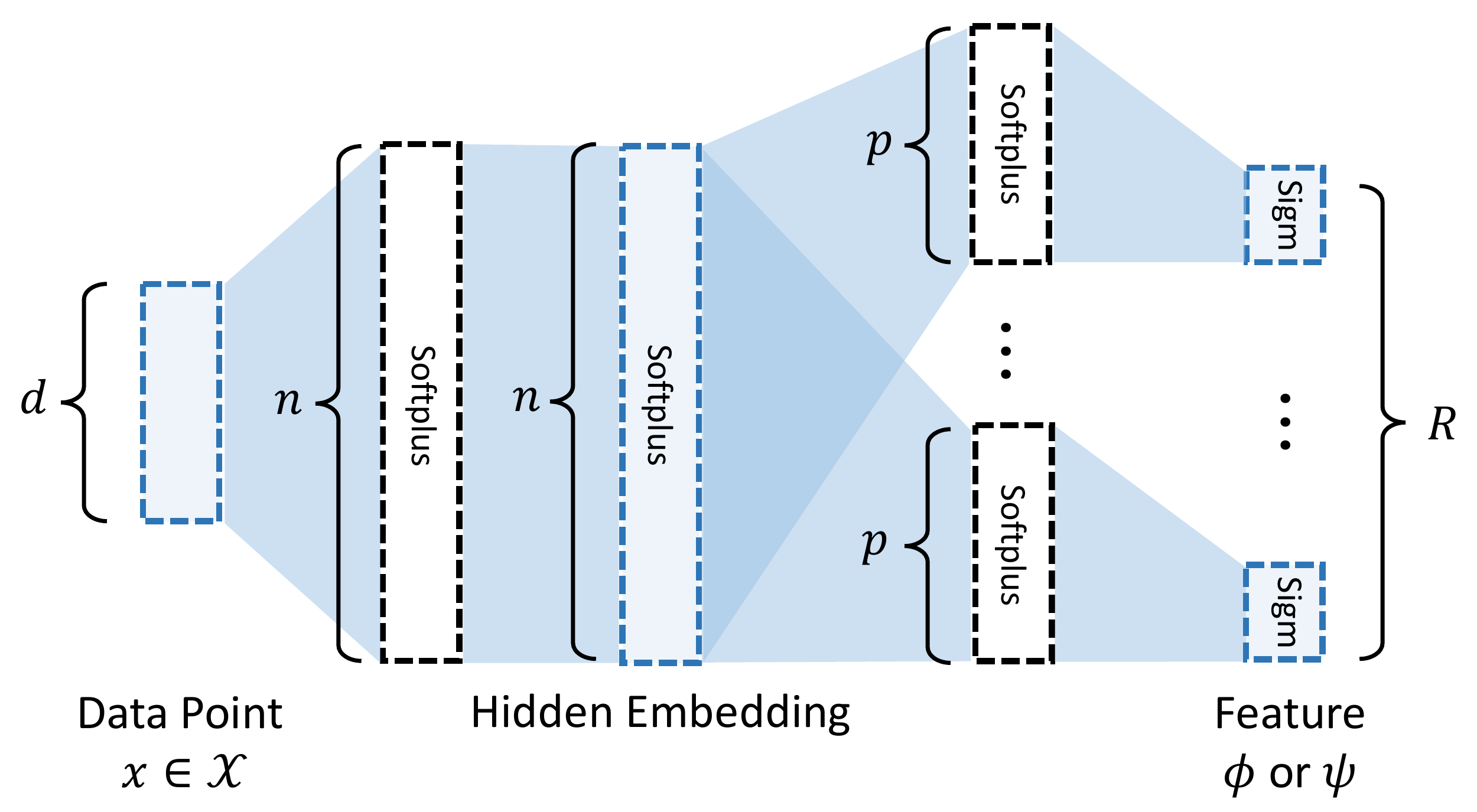}
\caption{Network structure}
\end{subfigure}
\vspace{-.1in}
\caption{(a) The architecture of the proposed non-stationary neural spectral kernel. A hidden embedding will first summarize the data point; then, the embedding will be mapped to different features via multi-branch neural networks. The kernel value is calculated by summing up the products between two sets of learned features, $\{\psi_r\}$ and $\{\phi_r\}$, weighted by the spectrum $\{\nu_r\}$. The spectrum, feature functions, and hidden embeddings are jointly learned from data. (b) The structure of the multi-branch neural network. There are two shared layers with $n$ nodes per layer that generate the hidden embedding; $d$ denotes the dimension of the input data point; $p$ denotes the dimension of the middle layer that generates the feature. Further specifications of neural networks will be provided in the experiments in Section \ref{sec:exp}.
}
\label{fig:architecture}
\vspace{-.15in}
\end{figure}

{\bf Kernel representation.} 
The main idea of the proposed model is to represent the influence kernel $k$ using a general finite-rank decomposition
\begin{equation}
k(x',x) = \sum_{r=1}^R \nu_r \psi_r(x') \phi_r(x), \quad \nu_r \geq 0,
\label{kernel}
\end{equation}
where 
\[\psi_r: \calX \to \mathbb{R},
\quad \phi_r: \calX \to \mathbb{R},
\quad r=1,\cdots, R,\] 
are two sets of {\it feature functions} 
in some smooth functional space $\calF\subset C^0(\calX),$ and $\nu_r$ is the corresponding weight - or ``spectrum''. 
This representation is motivated by the spectral decomposition of a general kernel function.
While functional spectral decomposition is usually infinitely dimensional, for practical considerations, we truncate the ``spectrum'' and only consider a finite rank representation.
Note that while we view \eqref{kernel} as similar to a spectral decomposition, 
it can be better understood as feature maps in kernel representation,
and, particularly, we do not need the feature functions $\psi_r$ and $\phi_r$ to be orthogonal.

The decomposition \eqref{kernel} represents the kernel function using three parts: two sets of (normalized) feature functions and the energy spectrum—the spectrum $\nu_r$ plays the role of weights to combine the feature maps. In the learning process, we can train the feature functions (typically neural networks) and the weights separately; since learning the normalized feature maps tend to be more numerically stable. The proposed form of kernel is not necessarily positive semi-definite, and even not symmetric. Note that the spectral representation can allow for a completely general kernel that can be used for non-stationary processes (since our influence function does not impose a shift-invariant structure). Moreover, the spectral representation allows us to go beyond monotone decay or parametric form as commonly assumed in the prior literature. 

{\bf Neural network feature function.} 
As one of the most salient features of our method, feature functions $\{\phi_r, \psi_r\}$ are represented using neural networks, leveraging their known universal approximation power. First, the input data point $x \in \mathcal{X}$ will be projected to a hidden embedding space via a multi-layer shared network, aiming to extract key information of the input data.  
Here we have adopted Softplus non-linear activation in this feature extraction sub-network, while other options may be possible.
Next, the hidden embedding will be mapped to the different features $\{\phi_r(x)\}$ through $R$ branched sub-networks.
To ensure the output feature is constrained in a bounded space, we choose the scaled sigmoidal function as the activation of the output layer, i.e., $f(x) = s / (1 + \exp(-x))$, where $s$ is a constant to enable rescaling of the output to a proper range (in our setting, we set $s$ to be 100). 
The overall architecture of our kernel formulation and the structure of the multi-branch neural network are summarized in Figure~\ref{fig:architecture} (a) and (b), respectively. Further specifications of neural networks will be provided in the experiments in Section \ref{sec:exp}.

\vspace{-.05in}
\subsection{Maximum likelihood model recovery}
\label{sec:MLE}
\vspace{-.05in}

An essential task in learning the neural point process model is to estimate the influence kernel function for the point process. The reason is two-fold: First, the kernel function is the most important component in representing the point process. Second, in practice, the influence kernel offers clear interpretations, such as ``how events at a particular time and location will influence future events at a given time and location.'' Such interpretation is essential for predicting using event data-- one of the main applications for point process models. 



To estimate the influence kernel function for point process models, we consider a popular approach through maximum likelihood estimation (MLE). 
Formally, the optimal kernel can be found by solving the following optimization problem given 
$M$ sequences of training event sequences over the time horizon $[0, T]$: $\{x_{i,j}\}$, $i = 1, ..., N_j$, $j = 1, \ldots, M$: 
%
\begin{equation}
    \max_{k \in \mathcal K} ~\ell[k] 
    \coloneqq \frac{1}{M} \sum_{j=1}^M\left( \int_{\mathcal X}  \log \lambda_j[k](x) d \mathbb N_j(x) -  \int_{\mathcal X}  \lambda_j[k](x) dx\right),
    \label{eq:log-likelihood}
\end{equation}
where $\lambda_j$ and $\N_j$ denote the conditional intensity and counting measure associated with the $j$-th trajectory, respectively, and $\mathcal K\subset C^0(\calX \times \calX )$ represents the family of regular kernel functions induced by the feature function family $\calF$ and the finite-rank decomposition \eqref{kernel}. 

{\bf Learning algorithm.} To solve MLE \eqref{eq:log-likelihood} when the kernel function $k$ is parameterized by neural networks, we use a stochastic gradient as summarized in Algorithm~\ref{alg:learning} (Appendix~\ref{append:algos}). Note that to calculate the log-likelihood function $\ell$ in \eqref{eq:log-likelihood}, we need to evaluate an integral (the second term), which does not have a closed-form expression. We approximate the integral numerically by Monte Carlo integration -- drawing samples and take the average, as described in Algorithm~\ref{alg:integral} (Appendix~\ref{append:algos}). 

\vspace{-.05in}
\section{Theoretical guarantees of MLE}
\label{sec:theory}
\vspace{-.05in}

We first consider the MLE as a {\it functional optimization problem}, since when using neural networks to approximate the kernel function, we are interested in such functional approximation results. We show that the expected log-likelihood reaches its maximum at the true kernel with the second-order derivative bounded away from zero, and thus the {\it true kernel function is identifiable by solving the MLE problem} under some regularity conditions. 



%


Consider the log-likelihood of point processes over a family of kernel functions
$\overline \calK$ which contains $\calK$, the family induced by feature functions in $\calF$ and non-negative spectrum $\{\nu_r\}_{r=1}^R$. Note that $\overline \calK$ may go beyond the finite-rank decomposition in \eqref{kernel}.
Later throughout the theoretical analysis, we omit the spectrum as they can be absorbed into the feature functions. The details are discussed in Remark \ref{remark:absorb_spectrum}.

\begin{assumption}\label{assump:B1B2} 
(A1) The kernel function family $\overline{\calK} \subset C^0(\calX \times \calX)$ which is uniformly bounded, and the true kernel $k^* \in  \overline{\calK}$;
(A2) There exist $c_1, c_2$ positive constants, 
such that for any $k \in \overline{\calK}$, {\it a.s.} for event data trajectory,
$c_1 \le \lambda[k](x) \le c_2$, $\forall x \in \calX$. 
\end{assumption}

Note that, apart from (A2), we only need kernel functions to be measurable for theoretical analysis which is guaranteed by (A1). 
In practice, the proposed neural network parametrization leads to a continuous and smooth (low-rank) kernel function, which induces non-singular $\lambda$.

We have the following lemma, which shows that when the log-likelihood function has a local perturbation around the true kernel function, there is going to be a decrease in the log-likelihood function value in expectation. Note we assume that $k^*$ lies in (or can be well-approximated by) the family of function class $\mathcal K$ -- thanks to the well-known universal approximation power of neural networks.



\begin{lemma}[Local perturbation of likelihood function around the true kernel function]
\label{lemma:perturb}
Under Assumption \ref{assump:B1B2}, for any $\tilde k\in \overline \calK$ and $\delta k = \tilde k - k^*$, we have
\begin{equation}
\ell [k^*] 
-\ell [\tilde k] 
\ge
 \frac{1}{M}
 \left\{
 -\sum_{j=1}^M \int_\calX 
 \delta \lambda_j(x)
\left(\frac{d\N_j(x)}{\lambda_j[k^*](x)}  - dx\right)
+ \frac{1}{2 c_2^2 }
\sum_{j=1}^M 
\int_\calX 
(\delta \lambda_j(x))^2 d\N_j(x) \right\}, \label{eq:perturb1}
\end{equation}
where
\begin{equation}\label{delta_lambda}
\delta \lambda_j(x) 
:= 
\int_{\calX_{t(x)}} 
\delta k(x',x)
d \N_j (x').
\end{equation}
\end{lemma}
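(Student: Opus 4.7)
The plan is to expand $\ell[k^*]-\ell[\tilde k]$ term by term and use a second-order Taylor expansion of $\log$ with an explicit remainder controlled by the uniform upper bound $c_2$ from Assumption A2. The starting observation is that the conditional intensity $\lambda_j[k](x)=\mu+\int_{\calX_{t(x)}}k(x',x)d\N_j(x')$ is affine in $k$, so $\lambda_j[\tilde k](x)-\lambda_j[k^*](x)=\delta\lambda_j(x)$ with $\delta\lambda_j$ exactly as defined in \eqref{delta_lambda}. Writing out the definition of $\ell$ in \eqref{eq:log-likelihood},
\begin{equation*}
\ell[k^*]-\ell[\tilde k] \;=\; \frac{1}{M}\sum_{j=1}^M\Bigl\{\int_\calX\bigl[\log\lambda_j[k^*](x)-\log\lambda_j[\tilde k](x)\bigr]d\N_j(x) \;+\; \int_\calX \delta\lambda_j(x)\,dx\Bigr\}.
\end{equation*}
The second integral already matches the $-\delta\lambda_j(x)\,(-dx)$ contribution on the right-hand side of \eqref{eq:perturb1}, so the real work is to produce a lower bound for the integrand $\log\lambda_j[k^*](x)-\log\lambda_j[\tilde k](x)$.

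For that I would apply Taylor's theorem with integral remainder to $f(u)=\log u$ on the segment between $a:=\lambda_j[k^*](x)$ and $a+h:=\lambda_j[\tilde k](x)$, where $h=\delta\lambda_j(x)$. Since $f''(u)=-1/u^2$, one gets the exact identity $\log a-\log(a+h)=-h/a+\tfrac{1}{2}h^2/\xi^2$ for some $\xi$ lying between $a$ and $a+h$. By Assumption A2 applied to both $k^*$ and $\tilde k$ in $\overline\calK$, both endpoints (and hence $\xi$) lie in $[c_1,c_2]$, so $1/\xi^2\ge 1/c_2^2$ and therefore
\begin{equation*}
\log\lambda_j[k^*](x)-\log\lambda_j[\tilde k](x)\;\ge\; -\frac{\delta\lambda_j(x)}{\lambda_j[k^*](x)} + \frac{(\delta\lambda_j(x))^2}{2c_2^2}.
\end{equation*}
Integrating this pointwise inequality against $d\N_j(x)$ and summing over trajectories produces the two terms on the right-hand side of \eqref{eq:perturb1}, completing the bound.

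The only subtle point, and the one I would write carefully, is the justification that Taylor's theorem applies on the whole segment from $a$ to $a+h$: we need the segment to stay inside $[c_1,c_2]$, which is exactly what A2 delivers uniformly over $\overline\calK$, \emph{a.s.}\ in the data trajectories. The lower bound $c_1>0$ rules out singularities of $\log$ and its derivatives, while $c_2<\infty$ gives the uniform control of the Taylor remainder by $1/c_2^2$; without A2 the curvature term would not be dominated by a single deterministic constant. Everything else, including the lack of any expectation over the counting measure, is just algebra: the random quantities $\N_j$ appear linearly through the integrand, so no probabilistic tools (e.g.\ martingale or compensator arguments) are required at this stage; those will come in only when one later takes expectations to pass from this sample-path inequality to a bound on $\E\ell$.
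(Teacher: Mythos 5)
Your proof is correct and follows essentially the same route as the paper's: exploit the affineness of $\lambda_j[k]$ in $k$ to identify the perturbation with $\delta\lambda_j$, then apply a second-order Taylor expansion of $\log$ whose remainder is bounded below by $\tfrac{1}{2c_2^2}(\delta\lambda_j)^2$ via Assumption (A2). The only cosmetic difference is that you expand $\ell[k^*]-\ell[\tilde k]$ directly (and invoke the Lagrange rather than integral form of the remainder, despite the label), whereas the paper bounds $\ell_j[\tilde k]-\ell_j[k^*]$ from above and negates.
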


The implication of Lemma \ref{lemma:perturb} is the following. Note that in \eqref{eq:perturb1}, we have nicely decomposed the difference in the likelihood function caused by a small perturbation around the true kernel function, as two terms: the first term in \eqref{eq:perturb1} is a martingale integral (since the conditional expectation of $(d\N_j(x)/\lambda_j[k^*](x)- dx)$ is zero, $\forall j$), and the second term is the integral of a quadratic term against the counting measure. The expectation of the first term is zero due to the property of the martingale process. For the second term,  per $j$, 
\begin{equation}
\int_\calX 
(\delta \lambda_j(x))^2 d\N_j(x)
\approx 
\int_\calX 
(\delta \lambda_j(x))^2 \lambda_j^*(x) dx
\ge
c_1 \int_\calX 
(\delta \lambda_j(x))^2dx,
\label{eqn_delta_lambda}
\end{equation}
and thus perturbation of the intensity function which corresponds to the second (quadratic) term in \eqref{eq:perturb1} will be reflected in the decrease of the log-likelihood. 
Furthermore, we have the identifiability of the kernel itself, as stated in the following theorem.


%
\begin{theorem}[Kernel identifiability using maximum likelihood] 
\label{main-thm}
Under Assumption \ref{assump:B1B2}, the true kernel function $k^*$ is locally identifiable in that $k^*$  is a local minimum solution of maximum likelihood \eqref{eq:log-likelihood} in expectation.
\end{theorem}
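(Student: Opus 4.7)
The plan is to take expectations on both sides of the perturbation inequality \eqref{eq:perturb1} from Lemma~\ref{lemma:perturb}, which conveniently decomposes $\ell[k^*]-\ell[\tilde k]$ into a linear (martingale) term plus a nonnegative quadratic term in $\delta\lambda_j$. For the linear term, I would observe that $\delta\lambda_j(x)$ is $\calH_{t(x)}$-predictable by its definition in \eqref{delta_lambda}, while $d\N_j(x)-\lambda_j[k^*](x)\,dx$ is the compensated counting measure under the true intensity and hence a martingale increment; the integral is therefore a martingale transform with zero mean, so this term vanishes in expectation.

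For the quadratic term, I would make rigorous the heuristic in \eqref{eqn_delta_lambda} by applying Campbell's formula (the first-moment identity for marked point processes with predictable integrand) together with the lower bound in (A2),
\[
\E \int_\calX (\delta\lambda_j(x))^2 d\N_j(x)
= \E \int_\calX (\delta\lambda_j(x))^2 \lambda_j[k^*](x)\, dx
\ge c_1\, \E \int_\calX (\delta\lambda_j(x))^2 dx.
\]
Combining the two steps yields
\[
\E\ell[k^*]-\E\ell[\tilde k] \;\ge\; \frac{c_1}{2 c_2^2 M} \sum_{j=1}^M \E \int_\calX (\delta\lambda_j(x))^2 dx \;\ge\; 0,
\]
so $k^*$ is at the very least a weak local maximizer of the expected log-likelihood within $\overline\calK$.

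The main obstacle will be upgrading this to strict local identifiability, i.e.\ showing that the right-hand side is strictly positive whenever $\delta k \not\equiv 0$. I would argue by contrapositive: if $\E\!\int_\calX(\delta\lambda_j(x))^2 dx = 0$, then $\delta\lambda_j(x)=0$ almost surely for Lebesgue-a.e.\ $x\in\calX$. Recalling that $\delta\lambda_j(x)=\int_{\calX_{t(x)}} \delta k(x',x)\, d\N_j(x')$, I would condition on the event that the trajectory has exactly one prior event in a small neighborhood of an arbitrary $(t',m')\in\calX$; such realizations occur with positive probability thanks to the two-sided bound $c_1\le\lambda\le c_2$ in (A2). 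On these realizations $\delta\lambda_j(x)$ reduces to $\delta k(x',x)$ for a single $x'$, forcing $\delta k(x',x)=0$ for a.e.\ $x$ with $t(x)>t'$. Varying $(t',m')$ over a dense subset and invoking continuity of $\delta k$ (from $\tilde k, k^* \in C^0(\calX\times\calX)$) then yields $\delta k\equiv 0$ on the causal domain $\{t(x')<t(x)\}$, contradicting $\tilde k \ne k^*$.

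The two moment-type steps---martingale cancellation of the linear term and the Campbell-type lower bound on the quadratic term---are routine once (A2) is in place. The delicate part is the one-event conditioning argument for identifiability: formally, it requires either a Palm-calculus computation or an explicit lower bound on the probability that the process produces a single event in a small window and none elsewhere, plus care in handling realizations where several past events could simultaneously contribute to the sum defining $\delta\lambda_j(x)$. Controlling these measure-theoretic details, and combining them with the continuity of $\delta k$ to promote almost-everywhere vanishing to pointwise vanishing on the causal set, is where the main technical effort of the proof lies.
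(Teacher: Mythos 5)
Your first two steps coincide with the paper's: take expectations in Lemma~\ref{lemma:perturb}, kill the martingale term using the predictability of $\delta\lambda_j$, and convert $\E\int(\delta\lambda_j)^2\,d\N_j$ into $\E\int(\delta\lambda_j)^2\lambda_j[k^*]\,dx \ge c_1\,\E\int(\delta\lambda_j)^2\,dx$ via (A2). Where you diverge is the strictness step. The paper (Lemma~\ref{lem:KL_lower_bound}) proves a \emph{quantitative} coercivity bound, $\E(\ell[k^*]-\ell[\tilde k]) \ge \frac{c_1^2}{2c_2^2}e^{-(c_2-c_1)|\calM|T}\|\delta k\|_2^2$, by writing $\E\int(\delta\lambda)^2dx$ as an integral over the stratified trajectory space $\calE=\bigsqcup_i\calE_i$, lower-bounding the trajectory density by $\underline\rho(\calH_T)=\exp(|\calH_T|\log c_1-|\calM|Tc_2)$ (which depends only on the event count), and expanding $(\delta\lambda(x))^2$ into off-diagonal pairs $x'\ne x''$ and diagonal pairs $x'=x''$: because $\underline\rho$ is position-independent, the off-diagonal contribution factorizes into $\bigl(\int_{\calX_{t(x)}}\delta k(x',x)\,dx'\bigr)^2\ge 0$ and can be dropped, while the diagonal contribution is exactly a constant times $\|\delta k\|_2^2$. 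Your contrapositive argument via single-event realizations is, in essence, the diagonal/one-event stratum of this expansion, so the two proofs converge on the same mechanism; but the paper's version avoids the Palm-calculus and a.e.-to-everywhere promotion issues you rightly flag (it never needs continuity of $\delta k$ for this step), and the explicit $\|\delta k\|_2^2$ lower bound it produces is strictly stronger than qualitative identifiability --- it is reused as one half of the sandwich in the proof of Theorem~\ref{thm:misspecification}. If you carry out your plan, note that restricting the expectation to the one-event stratum $\calE_1$ with the density lower bound $c_1e^{-c_2|\calM|T}$ already yields a quantitative bound of the same flavor and spares you the delicate conditioning argument.
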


\begin{remark}[Function identification and neural network parametrization]
Theorem \ref{main-thm} derives a result which holds for variational perturbation of the kernel function $k$ in the possibly parametric family $\calK$ induced by the feature function family $\calF$. 
When $\calF$ is the class of functions that a neural network can represent, the perturbation $\delta k$ is induced by the change of network parameters. It is common that in neural network models, parameter identification is difficult to search for (e.g., due to the symmetry of permuting hidden neurons); however, the neural network function identification may still hold, e.g., under the mean-field approximation \citep{mei2018mean}. 
Thus the kernel function identification results in Theorem \ref{main-thm} is important when learning kernel functions by neural networks.
\end{remark}

\begin{remark}[Finite rank kernel representation]
\label{remark:absorb_spectrum}
When assuming the parameter representation \eqref{kernel}, we represent the true kernel as $k^*(x',x) = \sum_{r=1}^R\nu_r\psi_r^*(x')\phi_r^*(x)$. For theoretical analysis purposes, without loss of generality, we can assume $\nu_r = 1$, $r=1,\cdots,R$, since they can absorbed into the feature functions. Consider a perturbed kernel where the feature functions are
$\tilde{\psi}_r = \psi_r + \delta \psi_r$ and $\tilde{\phi}_r = \phi_r + \delta \phi_r$,
and remains to satisfy Assumption \ref{assump:B1B2}. The kernel function variation $\delta k$ in \eqref{delta_lambda} can then be written as 
$
\delta k(x',x) =
\sum_{r =1 }^R  
\left( 
\delta \psi_r(x') \phi_r(x)
+ \psi_r(x') \delta  \phi_r(x) 
+ \delta \psi_r(x') 
\delta  \phi_r(x)
\right)$. 
With a rank-$R$ representation of the kernel and smooth $\psi_r$, $\phi_r$ 
represented by neural networks, we potentially prevent overfitting that leads to singular kernels, which may be a problem for the over-complete kernel family $\overline \calK$ as in (A1). Later we show experimentally that the learned kernel is smooth and close to the true kernel in Figure \ref{fig:syn-1d-results} and Figure \ref{fig:syn-2d-results}.  
\end{remark}

We also studied the MLE for parameterized kernel in Appendix \ref{app:MLE_finite_dim}, when the target feature function belongs to a certain parametric function class with a finite number of parameters. This includes, for instance, spline functions, functions represented by Fourier basis, and neural networks, which proves that our proposed method acts as a fundamental framework of finite-rank and neural network kernel representation.

\vspace{-.05in}
\section{Numerical experiments}
\label{sec:exp}
\vspace{-.05in}

This section presents experimental results on both synthetic and real data and compares them with several state-of-the-arts, including 
(i) standard Hawkes process with an exponentially decaying kernel function (\texttt{Hawkes}) \citep{Hawkes1971};
(ii) recurrent marked temporal point processes (\texttt{RMTPP}) \citep{Du2016}; and 
(iii) Neural Hawkes process (\texttt{NH}) \citep{Mei2017}. 
See Appendix~\ref{append:baseline} for a detailed review of those existing methods. 
To perform the experiments, we use the following procedure:
Given training data, we first estimate the kernel function to obtain $\hat k(\cdot, \cdot)$ by solving the maximum likelihood problem \eqref{eq:log-likelihood} using stochastic gradient descent, as described in Section \ref{sec:MLE}. Note that according to \eqref{eq:def-conditional-intensity}, the conditional intensity function can be treated as a prediction of the chance of having an event at a given time $t$ after observing the past events. Thus, to evaluate the prediction performance on test data, given a test trajectory, we perform {\it online prediction} by feeding the past events (in the test trajectory) into evaluating the conditional intensity function according to \eqref{kernel_def}, which gives the conditional probability of a future event given the past observations. 

{\bf Performance metrics.} We consider two performance metrics: (i) The performance for synthetic data is evaluated by measuring the {\it out-of-sample mean-average-error (MAE)} for conditional intensity function. For synthetic data, we know the true kernel, which is denoted as $k^*$. Given a sequence of test data, let $\lambda[k^*]$ be the ``true'' conditional intensity function defined by \eqref{kernel_def} using the true kernel $k^*$, and let $\lambda[\hat k]$ be an estimated conditional intensity function for the same sequence using the estimated kernel $\hat k$. Thus, $\lambda[\hat k]$ can be viewed as a probabilistic prediction of the events (since it specifies the likelihood of an event happening given the test trajectory's history) when the kernel $\hat k$ is estimated separately from training data. The out-of-sample MAE for one test trajectory is defined as $\int_{\mathcal X} |\lambda[k^*](x) - \lambda[\hat k](x)| dx$. Then we average this over all test trajectories to obtain our performance measure. (ii) For real data, since we do not know the true intensity function, we report the average {\it out-of-sample predictive log-likelihood}. Given a test trajectory (information contained in counting measure $\mathbb N(x)$), the predictive likelihood for a trajectory using an estimated kernel $\hat k$ is given as
$\int_{\mathcal X} \log \lambda[\hat k](x) d \mathbb N(x) -  \int_{\mathcal X}  \lambda[k](x) dx$. The average out-of-sample predictive log-likelihood is obtained by average over test trajectories.
This has been widely adopted as a metric for real-data \citep{Mei2017, Omi2019, Zhang2019, zhu2021spatio, zhu2021imitation, pmlr-v130-zhu21b}:  
the higher the log-likelihood, the better the model predicts. 
See all experiment configurations in Appendix~\ref{append:baseline}.

{\bf Synthetic data.} In the experiment, we assume the background rate is a constant ($\mu=1$) and focus on recovering both stationary and non-stationary kernel structures. We generate four one-dimensional and three two-dimensional synthetic data sets, which are simulated by a vanilla Hawkes process and our model described in Section~\ref{sec:proposed-model} with randomly initialized parameters. The simulated data is generated by the thinning algorithm described in Algorithm~\ref{alg:thinning} (Appendix~\ref{append:algos}), which is an accept-reject method for simulating a point process \citep{Daley2008, Gabriel2013}. 
For ease of comparison, we normalize the time and mark spaces for all data sets to the range from 0 to 100. Each data set is composed of 1,000 event sequences with an averaged length of 121. We fit the models using 80\% of the synthetic data set and the remaining 20\% as the testing set. 

{\bf Kernel recovery.} 
One of the most important ability of our approach is to recover the true kernel.
We report the kernel recovery results for one- and two-dimensional synthetic data sets, shown in Figure~\ref{fig:syn-1d-results} and Figure~\ref{fig:syn-2d-results}, respectively.
In Figure~\ref{fig:syn-1d-results}, we observe that our proposed model can accurately recover the true kernel for one-dimensional data. In contrast, the vanilla Hawkes process with a stationary kernel failed to capture such non-stationarity.
We also present results of two-dimensional data in Figure~\ref{fig:syn-2d-results}, where a continuous one-dimensional mark is introduced to the events. The first three columns demonstrate two-dimensional ``slices'' of the four-dimensional kernel evaluation, indicating our model is also capable of recovering complex high-dimensional kernel structure.
We note that event points are sparsely scattered in the two-dimensional marked-temporal space, where the events' correlation is measured in a high-dimensional (in our example, four-dimensional) kernel space.

{\bf Intensity prediction.} 
We evaluate the predictive accuracy for conditional intensity $\lambda[\hat k](x)$ \eqref{kernel_def} using estimated kernel $\hat k$ from the training data. Note that $\lambda[\hat k](x)$ is a probabilistic prediction since it can be viewed as the instantaneous probability of an event given the historical events. The last panel in Figure~\ref{fig:syn-1d-results} shows the predicted one-dimensional conditional intensity functions given a sequence randomly selected from the testing set. 
We compare our predicted $\lambda[\hat k](x)$ with the one estimated by a vanilla Hawkes model with stationary kernel. 
The result shows that the predicted conditional intensity $\lambda[\hat k](x)$ suggested by our model well matches that using the true kernel $\lambda[k^*](x)$. In contrast, the Hawkes model only provides limited flexibility for modeling the non-stationary process. 
Two panels in the last column of Figure~\ref{fig:syn-2d-results} give another comparison between the true and predicted two-dimensional conditional intensities over marked-temporal space. 
This result confirms that our model can accurately predict the two-dimensional conditional intensity function. 
To validate the robustness of our proposed method, we also test our model using another one-dimensional data set generated by a vanilla Hawkes process with a stationary parametric kernel, as shown in Figure~\ref{fig:syn-1d-results-sanity-check}. 
Though our model is evidently overparametrized for this data, our method is still able to recover the kernel structure, as well as predict the conditional intensity accurately, which demonstrate the adaptiveness of our model and confirm that our model can approximate the stationary kernel without incurring overfitting. 
More results of kernel recovery experiments can be found in Appendix~\ref{append:extra-results}.

\begin{figure}[!t]
\vspace{-.15in}
\centering
\begin{subfigure}[h]{1.\linewidth}
\includegraphics[width=\linewidth]{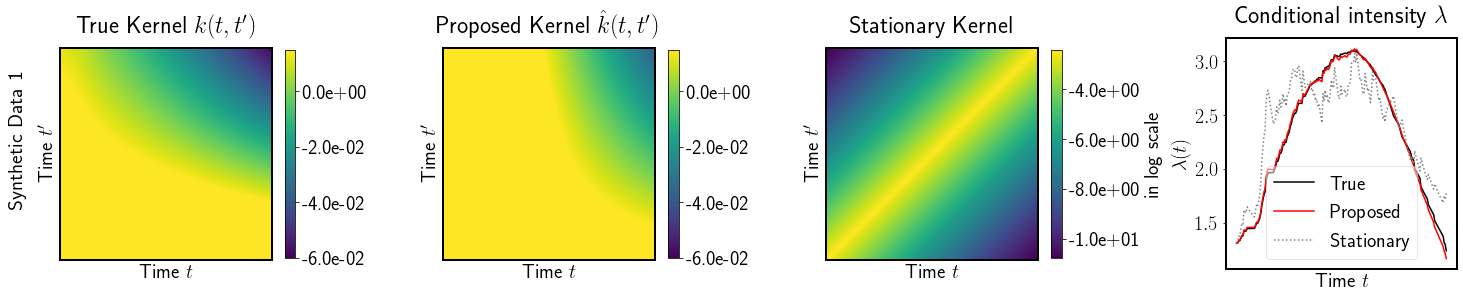}
\end{subfigure}
\vspace{-.15in}
\caption{Kernel recovery results on a one-dimensional synthetic data set. The first three panels show the true kernel that generates the data, kernel learned by our model, and kernel learned by a vanilla Hawkes process, respectively.
The fourth panel shows the true and predicted conditional intensity functions of a test sequence.
}
\label{fig:syn-1d-results}
\vspace{-.1in}
\end{figure}

\begin{figure}[!t]
\centering
\begin{subfigure}[h]{1.\linewidth}
\includegraphics[width=\linewidth]{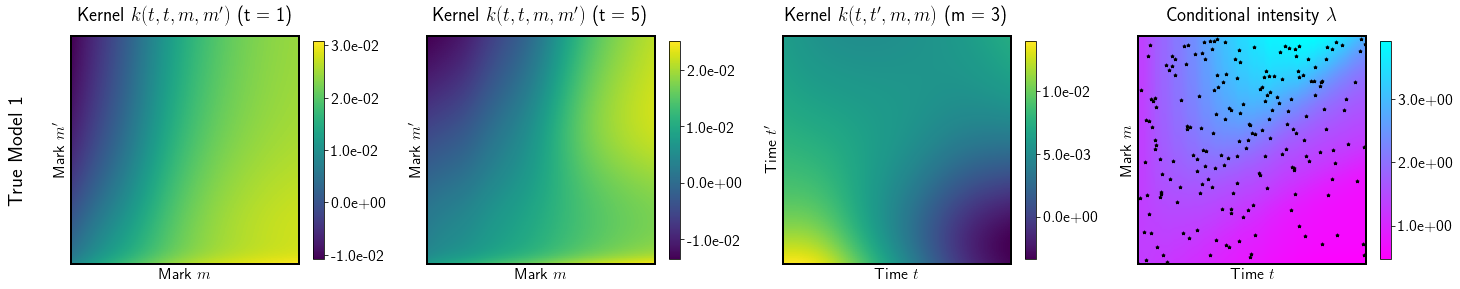}
\end{subfigure}
\begin{subfigure}[h]{1.\linewidth}
\includegraphics[width=\linewidth]{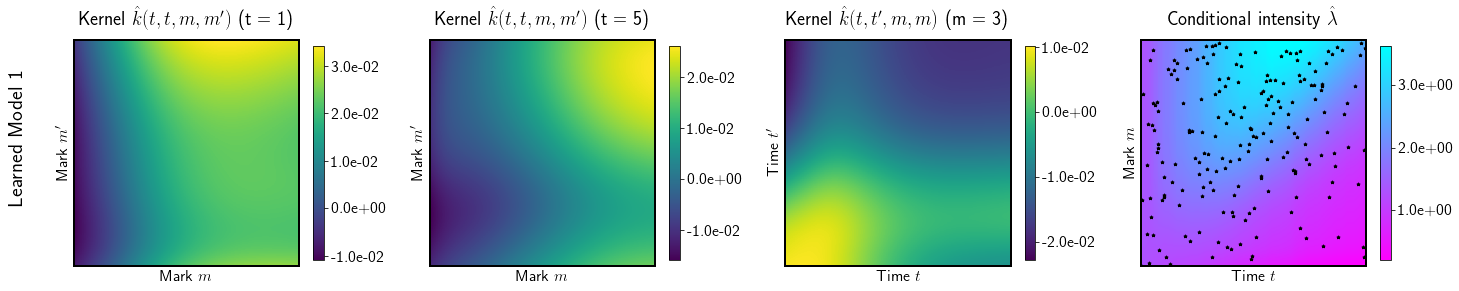}
\end{subfigure}
\vspace{-.1in}
\caption{Kernel recovery results on a two-dimensional synthetic data set. Two rows presents the results of the true model and our learned model, respectively.
The first three columns show different snapshots of kernel evaluation for each model; the last column shows their corresponding conditional intensity over marked-temporal space given a test sequence, where the black dots indicate the location of the events.
}\label{fig:syn-2d-results}
\vspace{-.15in}
\end{figure}

\begin{figure}[!t]
\centering
\begin{subfigure}[h]{1.\linewidth}
\includegraphics[width=\linewidth]{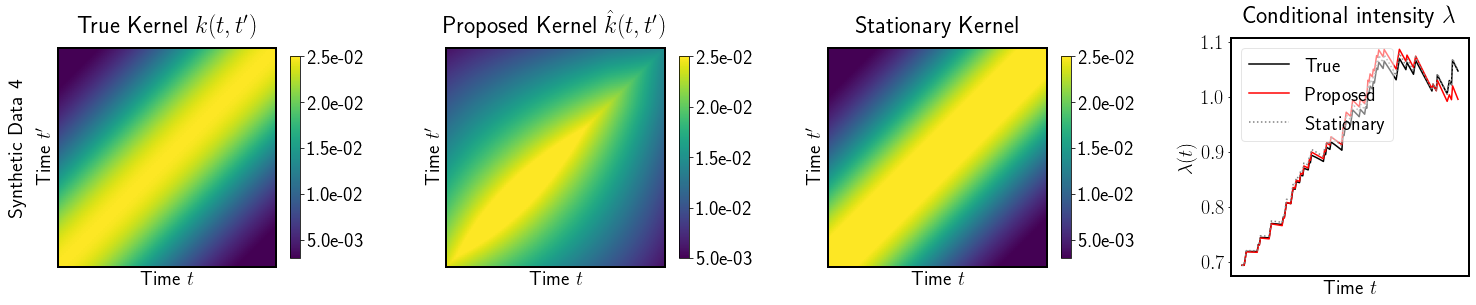}
\end{subfigure}
\vspace{-.15in}
\caption{Kernel recovery results on a one-dimensional synthetic data set generated by a vanilla Hawkes process with a {\it stationary exponentially decaying} kernel. This experiment acts as a sanity check.}
\label{fig:syn-1d-results-sanity-check}
\vspace{-.15in}
\end{figure}


Additionally, we compare our method with three other baselines on the synthetic data sets.
Figure~\ref{fig:result-comparison} shows the conditional intensities based on the true kernel $\lambda[k^*](x)$ (solid black lines) and predicted intensity using each method. We observe that our method obtained much better predictive performances compared to other baseline approaches. Table~\ref{tab:loglike-mae} summarizes two performance metrics for these methods: the predictive log-likelihood and the MAE. The result shows that our method greatly outperforms other baseline approaches in both metrics. In particular, the MAE of our method is at least 90\% lower than the other methods for both one-dimensional and two-dimensional data sets. 

\begin{figure}[!t]
\vspace{-.1in}
\centering
\begin{subfigure}[h]{.3\linewidth}
\includegraphics[width=\linewidth,right]{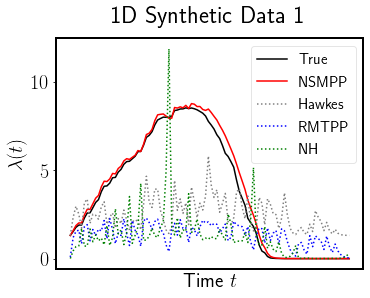}
\end{subfigure}
\hfill
\begin{subfigure}[h]{.3\linewidth}
\includegraphics[width=\linewidth,right]{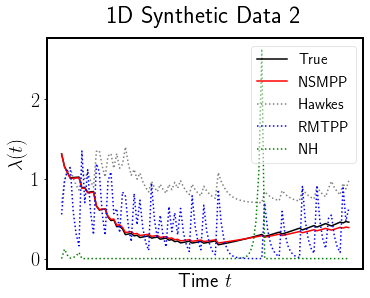}
\end{subfigure}
\hfill
\begin{subfigure}[h]{.3\linewidth}
\includegraphics[width=\linewidth,right]{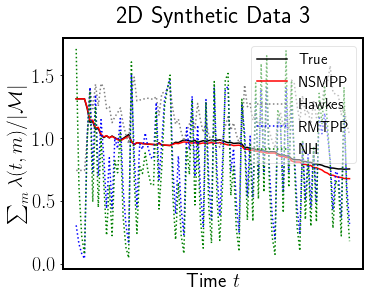}
\end{subfigure}
\vspace{-.1in}
\caption{Predicted conditional intensity using our method and other baselines for a sequence selected from test data. We aggregate the conditional intensity in mark space for ease of presentation and visualize the average conditional intensity over time for two-dimensional synthetic data on the third panel.}
\label{fig:result-comparison}
\vspace{-.05in}
\end{figure}

\begin{table*}[!t]
\caption{Performance comparison of our method and other existing methods: \texttt{Hawkes}, \texttt{RMTPP}, and \texttt{NH}.}
\vspace{-.1in}
\label{tab:loglike-mae}
\resizebox{\linewidth}{!}{%
    \begin{threeparttable}
        \centering
        \begin{tabular}{l:llllllll|cl}
            \toprule[1pt]\midrule[0.3pt]
            \multicolumn{1}{c:}{$\ell$ / MAE$^{1}$} & \multicolumn{1}{c}{1D Synthetic 1} & \multicolumn{1}{c}{1D Synthetic 2} & \multicolumn{1}{c:}{1D Synthetic 3} & \multicolumn{1}{c}{2D Synthetic 1} & \multicolumn{1}{c}{2D Synthetic 2} & \multicolumn{1}{c:}{2D Synthetic 3} & \multicolumn{1}{c}{Earthquake (2D)} & \multicolumn{1}{c||}{Robbery (1D)} & \#Parameters & \multicolumn{1}{c}{Training/Testing time$^{2}$}\\ \hline
            \texttt{NSMPP} & \multicolumn{1}{c}{{\bf -17.68~/~0.24}} & \multicolumn{1}{c}{{\bf -14.17~/~0.02}} & \multicolumn{1}{c:}{\bf{-21.94~/~0.02}} & \multicolumn{1}{c}{{\bf -65.92~/~10.16}} & \multicolumn{1}{c}{{\bf -87.02~/~0.20}} & \multicolumn{1}{c:}{{\bf -91.12~/~0.31}} & \multicolumn{1}{c}{{\bf -56.50}~/~NA} & \multicolumn{1}{c||}{{\bf -74.47}~/~NA} & 171,555 & \multicolumn{1}{c}{0.766~/~0.84} \\
            \texttt{RMTPP} & \multicolumn{1}{c}{-29.84~/~3.27} & \multicolumn{1}{c}{-36.10~/~0.33} & \multicolumn{1}{c:}{-89.56~/~0.34} & \multicolumn{1}{c}{-236.32~/~98.32} & \multicolumn{1}{c}{-320.01~/~20.97} & \multicolumn{1}{c:}{-456.92~/~29.18} & \multicolumn{1}{c}{-218.39~/~NA} & \multicolumn{1}{c||}{-132.55~/~NA} & 274,168 & \multicolumn{1}{c}{0.245~/~7.29}\\
            \texttt{NH} & \multicolumn{1}{c}{-48.34~/~3.42} & \multicolumn{1}{c}{-52.97~/~0.41} & \multicolumn{1}{c:}{-60.10~/~0.44} & \multicolumn{1}{c}{-289.10~/~49.21} & \multicolumn{1}{c}{-219.74~/~12.45} & \multicolumn{1}{c:}{-420.00~/~28.99} & \multicolumn{1}{c}{-189.39~/~NA} & \multicolumn{1}{c||}{-96.10~/~NA} & 282,755 & \multicolumn{1}{c}{0.204~/~6.09}\\
            \texttt{Hawkes} & \multicolumn{1}{c}{-24.12~/~2.65} & \multicolumn{1}{c}{-77.36~/~0.52} & \multicolumn{1}{c:}{-61.46~/~0.25} & \multicolumn{1}{c}{NA~/~NA} & \multicolumn{1}{c}{NA~/~NA} & \multicolumn{1}{c:}{NA~/~NA} & \multicolumn{1}{c}{NA~/~NA} & \multicolumn{1}{c||}{-197.84~/~NA} & 2 & \multicolumn{1}{c}{0.021~/~ $<$0.01}\\
            \midrule[0.3pt]\bottomrule[1pt]
        \end{tabular}%
        \begin{tablenotes}
            \small
            \item $^{1}$ Each table's entry of the data set includes the average predictive log-likelihood ($\ell$) and the MAE.
            \item $^{2}$ Training time represents the time for training one batch. Testing (inference) time refers to the time for predicting the conditional intensity function for a test sequence. Time is measured in seconds.
        \end{tablenotes}
    \end{threeparttable}
}
\vspace{-.15in}
\end{table*}

{\bf Real-data results.}
Finally, we test our method on two large-scale real data sets:
(1) \emph{Atlanta 911 calls-for-service data.} The data set contains the 911 calls-for-service data in Atlanta from 2015 to 2017. We extract 7,831 reported robberies from the data set since robbers usually follow a particular \textit{modus operandi} (M.O.). Each robbery report is associated with a timestamp indicating when the robbery occurred. We consider each series of robberies as a sequence.
(2) \emph{Northern California seismic data.} The Northern California Earthquake Data Center (NCEDC) provides public time series data \cite{NCEDC2014} that comes from broadband, short period, strong motion seismic sensors, GPS, and other geophysical sensors. We extract 16,401 seismic records with a magnitude larger than 3.0 from 1978 to 2018 in Northern California and partition the data into multiple sequences every quarter. We fit the models using 80\% of the data set and the remaining 20\% as the testing set. 

In Table~\ref{tab:loglike-mae}, we report the {\it out-of-sample predictive log-likelihood} of each method since the ground truth is not available for real data. We can see that our model attains the highest predictive log-likelihood for both synthetic and real data sets. In particular, the better performance on real data and significantly higher out-of-sample predictive log-likelihood achieved by our approach than other existing methods show that the non-stationary kernel seemingly captures the nature of the real data better in these cases. This shows the merit of our approach for real data where we do not know the ground truth. 

{\bf Training / testing time.}
In Table~\ref{tab:loglike-mae}, we compare the running time of our model and other baselines on one-dimensional synthetic data set. 
The size and performance of each model are presented in Table~\ref{tab:loglike-mae}. 
The running time refers to wall clock time. We can observe that the training time of our model is similar with other neural point process models (RMTPP and NH). Hawkes process model has the minimum training time because it has only two parameters to be estimated. The testing (inference) time of our model is significantly lower than the other two recurrent neural point process models.

\vspace{-.05in}
\section{Discussions}
\label{sec:discussion}
\vspace{-.05in}

We have presented a new non-stationary marked point process model with a general kernel represented by neural network feature maps, motivated by spectral decomposition of the kernel. 
The flexible kernel with expressive power enabled by neural networks can capture complex dependence across temporal, spatial, and mark spaces, which can be valuable for real-data modeling in various applications. 
The benefits of our approach are demonstrated by superior out-of-sample predictive log-likelihood on real data. The model can be learned efficiently with stochastic gradient descent. We also develop a theoretical guarantee for maximum likelihood identifiability. 
Generally, model architectures may well depend on specific tasks. For example, neural networks can be based on CNN if the high dimensional markers are in image-like shapes and can also be LSTM or even BERT if the markers are text-based.
Thus the choice of the deep model architecture and optimization algorithm can be more systematically explored, especially the comparison of non-stationary neural kernels to stationary ones. Understanding and characterizing what kernels can be learned through such an approach is left for future study. Also, when extending to high-dimensional mark space, more efficient algorithm is needed for the computation of the log-likelihood.

\clearpage

\bibliographystyle{iclr2022_conference}
\bibliography{iclr2022_conference}

\begin{thebibliography}{33}
\providecommand{\natexlab}[1]{#1}
\providecommand{\url}[1]{\texttt{#1}}
\expandafter\ifx\csname urlstyle\endcsname\relax
  \providecommand{\doi}[1]{doi: #1}\else
  \providecommand{\doi}{doi: \begingroup \urlstyle{rm}\Url}\fi

\bibitem[Arjovsky \& Bottou(2017)Arjovsky and Bottou]{arjovsky2017towards}
Martin Arjovsky and L{\'e}on Bottou.
\newblock Towards principled methods for training generative adversarial
  networks.
\newblock \emph{arXiv preprint arXiv:1701.04862}, 2017.

\bibitem[Bertozzi et~al.(2020)Bertozzi, Franco, Mohler, Short, and
  Sledge]{bertozzi2020challenges}
Andrea~L Bertozzi, Elisa Franco, George Mohler, Martin~B Short, and Daniel
  Sledge.
\newblock The challenges of modeling and forecasting the spread of covid-19.
\newblock \emph{Proceedings of the National Academy of Sciences}, 117\penalty0
  (29):\penalty0 16732--16738, 2020.

\bibitem[Britz et~al.(2017)Britz, Goldie, Luong, and Le]{britz2017massive}
Denny Britz, Anna Goldie, Minh-Thang Luong, and Quoc Le.
\newblock Massive exploration of neural machine translation architectures.
\newblock \emph{arXiv preprint arXiv:1703.03906}, 2017.

\bibitem[Chen \& Hall(2013)Chen and Hall]{Chen2013}
Feng Chen and Peter Hall.
\newblock Inference for a nonstationary self-exciting point process with an
  application in ultra-high frequency financial data modeling.
\newblock \emph{Journal of Applied Probability}, 50\penalty0 (4):\penalty0
  1006--1024, December 2013.
\newblock \doi{10.1239/jap/1389370096}.
\newblock URL \url{https://doi.org/10.1239/jap/1389370096}.

\bibitem[Daley \& Vere-Jones(2008)Daley and Vere-Jones]{Daley2008}
D.~J. Daley and D.~Vere-Jones.
\newblock \emph{An introduction to the theory of point processes. {V}ol. {II}}.
\newblock Probability and its Applications (New York). Springer, New York,
  second edition, 2008.
\newblock ISBN 978-0-387-21337-8.
\newblock General theory and structure.

\bibitem[Du et~al.(2016)Du, Dai, Trivedi, Upadhyay, Gomez-Rodriguez, and
  Song]{Du2016}
Nan Du, Hanjun Dai, Rakshit Trivedi, Utkarsh Upadhyay, Manuel Gomez-Rodriguez,
  and Le~Song.
\newblock Recurrent marked temporal point processes: Embedding event history to
  vector.
\newblock In \emph{Proceedings of the 22nd ACM SIGKDD International Conference
  on Knowledge Discovery and Data Mining}, KDD '16, pp.\  1555--1564, New York,
  NY, USA, 2016. Association for Computing Machinery.
\newblock ISBN 9781450342322.
\newblock \doi{10.1145/2939672.2939875}.

\bibitem[Gabriel et~al.(2013)Gabriel, Rowlingson, and Diggle]{Gabriel2013}
Edith Gabriel, Barry Rowlingson, and Peter Diggle.
\newblock stpp: An r package for plotting, simulating and analyzing
  spatio-temporal point patterns.
\newblock \emph{Journal of Statistical Software}, 53:\penalty0 1--29, 04 2013.
\newblock \doi{10.18637/jss.v053.i02}.

\bibitem[Hawkes(1971)]{Hawkes1971}
Alan~G. Hawkes.
\newblock {Spectra of some self-exciting and mutually exciting point
  processes}.
\newblock \emph{Biometrika}, 58\penalty0 (1):\penalty0 83--90, 04 1971.
\newblock ISSN 0006-3444.
\newblock \doi{10.1093/biomet/58.1.83}.
\newblock URL \url{https://doi.org/10.1093/biomet/58.1.83}.

\bibitem[Hochreiter \& Schmidhuber(1997)Hochreiter and
  Schmidhuber]{Hochreiter1997}
Sepp Hochreiter and J\"{u}rgen Schmidhuber.
\newblock Long short-term memory.
\newblock \emph{Neural Comput.}, 9\penalty0 (8):\penalty0 1735–1780, November
  1997.
\newblock ISSN 0899-7667.
\newblock \doi{10.1162/neco.1997.9.8.1735}.
\newblock URL \url{https://doi.org/10.1162/neco.1997.9.8.1735}.

\bibitem[Kingma \& Ba(2014)Kingma and Ba]{kingma2014adam}
Diederik~P Kingma and Jimmy Ba.
\newblock Adam: A method for stochastic optimization.
\newblock \emph{arXiv preprint arXiv:1412.6980}, 2014.

\bibitem[Li et~al.(2018)Li, Xiao, Zhu, Du, Xie, and Song]{Li2018}
Shuang Li, Shuai Xiao, Shixiang Zhu, Nan Du, Yao Xie, and Le~Song.
\newblock Learning temporal point processes via reinforcement learning.
\newblock In \emph{Proceedings of the 32nd International Conference on Neural
  Information Processing Systems}, NIPS '18, pp.\  10804--10814, Red Hook, NY,
  USA, 2018. Curran Associates Inc.

\bibitem[Luong et~al.(2015)Luong, Pham, and Manning]{luong2015effective}
Minh-Thang Luong, Hieu Pham, and Christopher~D Manning.
\newblock Effective approaches to attention-based neural machine translation.
\newblock \emph{arXiv preprint arXiv:1508.04025}, 2015.

\bibitem[Mei \& Eisner(2017)Mei and Eisner]{Mei2017}
Hongyuan Mei and Jason~M Eisner.
\newblock The neural hawkes process: A neurally self-modulating multivariate
  point process.
\newblock In \emph{Advances in Neural Information Processing Systems 30}, pp.\
  6754--6764. Curran Associates, Inc., 2017.

\bibitem[Mei et~al.(2018)Mei, Montanari, and Nguyen]{mei2018mean}
Song Mei, Andrea Montanari, and Phan-Minh Nguyen.
\newblock A mean field view of the landscape of two-layer neural networks.
\newblock \emph{Proceedings of the National Academy of Sciences}, 115\penalty0
  (33):\penalty0 E7665--E7671, 2018.

\bibitem[{Northern California Earthquake Data Center. UC Berkeley Seismological
  Laboratory. Dataset}(2014)]{NCEDC2014}
{Northern California Earthquake Data Center. UC Berkeley Seismological
  Laboratory. Dataset}.
\newblock {NCEDC}, 2014.
\newblock URL \url{https://ncedc.org}.

\bibitem[Ogata(1988)]{ogata1988statistical}
Yosihiko Ogata.
\newblock Statistical models for earthquake occurrences and residual analysis
  for point processes.
\newblock \emph{Journal of the American Statistical association}, 83\penalty0
  (401):\penalty0 9--27, 1988.

\bibitem[Ogata(1998)]{ogata1998space}
Yosihiko Ogata.
\newblock Space-time point-process models for earthquake occurrences.
\newblock \emph{Annals of the Institute of Statistical Mathematics},
  50\penalty0 (2):\penalty0 379--402, 1998.

\bibitem[Omi et~al.(2019)Omi, ueda, and Aihara]{Omi2019}
Takahiro Omi, naonori ueda, and Kazuyuki Aihara.
\newblock Fully neural network based model for general temporal point
  processes.
\newblock In \emph{Advances in Neural Information Processing Systems 32}, pp.\
  2120--2129. Curran Associates, Inc., 2019.

\bibitem[Reinhart(2018)]{Reinhart2017}
Alex Reinhart.
\newblock A review of self-exciting spatio-temporal point processes and their
  applications.
\newblock \emph{Statistical Science}, 33\penalty0 (3):\penalty0 299--318, 2018.

\bibitem[Remes et~al.(2017)Remes, Heinonen, and Kaski]{remes2017non}
Sami Remes, Markus Heinonen, and Samuel Kaski.
\newblock Non-stationary spectral kernels.
\newblock \emph{arXiv preprint arXiv:1705.08736}, 2017.

\bibitem[Remes et~al.(2018)Remes, Heinonen, and Kaski]{remes2018neural}
Sami Remes, Markus Heinonen, and Samuel Kaski.
\newblock Neural non-stationary spectral kernel.
\newblock \emph{arXiv preprint arXiv:1811.10978}, 2018.

\bibitem[Upadhyay et~al.(2018)Upadhyay, De, and Gomez~Rodriguez]{Upadhyay2018}
Utkarsh Upadhyay, Abir De, and Manuel Gomez~Rodriguez.
\newblock Deep reinforcement learning of marked temporal point processes.
\newblock In S.~Bengio, H.~Wallach, H.~Larochelle, K.~Grauman, N.~Cesa-Bianchi,
  and R.~Garnett (eds.), \emph{Advances in Neural Information Processing
  Systems 31}, pp.\  3168--3178. Curran Associates, Inc., 2018.

\bibitem[Vaswani et~al.(2017)Vaswani, Shazeer, Parmar, Uszkoreit, Jones, Gomez,
  Kaiser, and Polosukhin]{Vaswani2017}
Ashish Vaswani, Noam Shazeer, Niki Parmar, Jakob Uszkoreit, Llion Jones,
  Aidan~N Gomez, \L~ukasz Kaiser, and Illia Polosukhin.
\newblock Attention is all you need.
\newblock In \emph{Advances in Neural Information Processing Systems 30}, pp.\
  5998--6008. Curran Associates, Inc., 2017.

\bibitem[White(1982)]{white1982maximum}
Halbert White.
\newblock Maximum likelihood estimation of misspecified models.
\newblock \emph{Econometrica: Journal of the econometric society}, pp.\  1--25,
  1982.

\bibitem[Xiao et~al.(2017{\natexlab{a}})Xiao, Farajtabar, Ye, Yan, Song, and
  Zha]{Xiao2017B}
Shuai Xiao, Mehrdad Farajtabar, Xiaojing Ye, Junchi Yan, Le~Song, and Hongyuan
  Zha.
\newblock Wasserstein learning of deep generative point process models.
\newblock In \emph{Proceedings of the 31st International Conference on Neural
  Information Processing Systems}, NIPS '17, pp.\  3250--3259, Red Hook, NY,
  USA, 2017{\natexlab{a}}. Curran Associates Inc.
\newblock ISBN 9781510860964.

\bibitem[Xiao et~al.(2017{\natexlab{b}})Xiao, Yan, Yang, Zha, and
  Chu]{Xiao2017A}
Shuai Xiao, Junchi Yan, Xiaokang Yang, Hongyuan Zha, and Stephen~M. Chu.
\newblock Modeling the intensity function of point process via recurrent neural
  networks.
\newblock In \emph{Proceedings of the Thirty-First AAAI Conference on
  Artificial Intelligence}, AAAI '17, pp.\  1597--1603. AAAI Press,
  2017{\natexlab{b}}.

\bibitem[Zhang et~al.(2019)Zhang, Lipani, Kirnap, and Yilmaz]{Zhang2019}
Qiang Zhang, Aldo Lipani, Omer Kirnap, and Emine Yilmaz.
\newblock Self-attentive hawkes processes, 2019.

\bibitem[Zhu et~al.(2019)Zhu, Yuchi, Zhang, and Xie]{zhu2019sequential}
Shixiang Zhu, Henry~Shaowu Yuchi, Minghe Zhang, and Yao Xie.
\newblock Sequential adversarial anomaly detection for one-class event data.
\newblock \emph{arXiv preprint arXiv:1910.09161}, 2019.

\bibitem[Zhu et~al.(2021{\natexlab{a}})Zhu, Ding, Zhang, Van~Hentenryck, and
  Xie]{zhu2021spatio}
Shixiang Zhu, Ruyi Ding, Minghe Zhang, Pascal Van~Hentenryck, and Yao Xie.
\newblock Spatio-temporal point processes with attention for traffic congestion
  event modeling.
\newblock \emph{IEEE Transactions on Intelligent Transportation Systems},
  2021{\natexlab{a}}.

\bibitem[Zhu et~al.(2021{\natexlab{b}})Zhu, Li, Peng, and
  Xie]{zhu2021imitation}
Shixiang Zhu, Shuang Li, Zhigang Peng, and Yao Xie.
\newblock Imitation learning of neural spatio-temporal point processes.
\newblock \emph{IEEE Transactions on Knowledge and Data Engineering}, pp.\
  1--1, 2021{\natexlab{b}}.
\newblock \doi{10.1109/TKDE.2021.3054787}.

\bibitem[Zhu et~al.(2021{\natexlab{c}})Zhu, Yuchi, Zhang, and
  Xie]{zhu2021sequential}
Shixiang Zhu, Henry~Shaowu Yuchi, Minghe Zhang, and Yao Xie.
\newblock Sequential adversarial anomaly detection with deep fourier kernel.
\newblock In \emph{ICASSP 2021-2021 IEEE International Conference on Acoustics,
  Speech and Signal Processing (ICASSP)}, pp.\  3345--3349. IEEE,
  2021{\natexlab{c}}.

\bibitem[Zhu et~al.(2021{\natexlab{d}})Zhu, Zhang, Ding, and
  Xie]{pmlr-v130-zhu21b}
Shixiang Zhu, Minghe Zhang, Ruyi Ding, and Yao Xie.
\newblock Deep fourier kernel for self-attentive point processes.
\newblock In Arindam Banerjee and Kenji Fukumizu (eds.), \emph{Proceedings of
  The 24th International Conference on Artificial Intelligence and Statistics},
  volume 130 of \emph{Proceedings of Machine Learning Research}, pp.\
  856--864. PMLR, 13--15 Apr 2021{\natexlab{d}}.
\newblock URL \url{http://proceedings.mlr.press/v130/zhu21b.html}.

\bibitem[Zuo et~al.(2020)Zuo, Jiang, Li, Zhao, and Zha]{zuo2020transformer}
Simiao Zuo, Haoming Jiang, Zichong Li, Tuo Zhao, and Hongyuan Zha.
\newblock Transformer hawkes process.
\newblock In \emph{International Conference on Machine Learning}, pp.\
  11692--11702. PMLR, 2020.

\end{thebibliography}

\newpage

\appendix

\section{Algorithms}
\label{append:algos}

\begin{algorithm}[!h]
\SetAlgoLined
    {\bfseries Input:} $X = \{\boldsymbol{x}_j\}_{j=1,\dots,N}$ is the training set with $N$ sequences; $\eta$ is the number of learning iterations; $\gamma$ is the learning rate; $M$ is the size of a mini-batch\; 
    {\bfseries Initialization:} model parameters $\boldsymbol{\theta}_0$\; 
    $l \leftarrow 0$\;
    \While{$l < \eta$}{
        Randomly draw $M$ sequences from $X$ denoted as $\widehat{X}_l = \{\boldsymbol{x}_j\}_{j=1,\dots,M} \subset X$\;
        $\boldsymbol{\theta}_{l+1} \leftarrow \boldsymbol{\theta}_l + \gamma \partial \ell / \partial \boldsymbol{\theta}_l$ given $\widehat{X}_l$\;
        $l \leftarrow l + 1$\;
    }
\caption{Stochastic gradient-based learning algorithm}
\label{alg:learning}
\end{algorithm}

\begin{algorithm}[!h]
\SetAlgoLined
    {\bfseries Input:} $\lambda$ denotes the model that we need to evaluate; $\boldsymbol{x}_j = \{(t_i, m_i)\}_{i=1}^{N_j}$ is an event sequence, where $N_j$ is the number of events in the sequence; $\widetilde{N}$ is the number of samples uniformly drawn from $\mathcal{X}$; $\Lambda$ is the integral of the conditional intensity over the data space\;
    $\Lambda \leftarrow 0$\;
    \For{$n = 1,\dots, \widetilde{N}$}{
        Draw $x_n \coloneqq (t_n, m_n) \sim \mathcal{X}$\;
        $\mathcal{H}_{t(x_n)} \leftarrow \{(t_i, m_i) \in \boldsymbol{x}_j: t_i < t_n\}$\;
        $\Lambda \leftarrow \Lambda + \lambda_j(x_n)$ given $\mathcal{H}_{t(x_n)}$\;
    }
    $\Lambda \leftarrow |\mathcal{X}|\Lambda / \widetilde{N}$\;
\caption{Monte Carlo estimation for the integral of conditional intensity in \eqref{eq:log-likelihood}}
\label{alg:integral}
\end{algorithm}

\begin{algorithm}[!h]
\SetAlgoLined
  {\bfseries input} $\boldsymbol{\theta}, T, \mathcal{M}$\;
  {\bfseries output} A set of events $\mathcal{H}_t$ ordered by time.\;
  Initialize $\mathcal{H}_t = \emptyset$, $t=0$, $m \sim \texttt{uniform}(\mathcal{M})$\;
  \While{$t < T$}{
    Sample $u \sim \texttt{uniform}(0, 1)$; $m \sim \texttt{uniform}(\mathcal{M})$; $D \sim \texttt{uniform}(0, 1)$\;
    $x^\prime \leftarrow (t, m^\prime)$; $\bar{\lambda} \leftarrow \lambda(x^\prime | \mathcal{H}_t)$\;
    $t \leftarrow t  - \ln u/\bar{\lambda}$\;
    $x \leftarrow (t, m)$; $\widetilde{\lambda} \leftarrow \lambda(x | \mathcal{H}_t)$\;
    \If{$D \bar{\lambda} > \widetilde\lambda$}{
      $\mathcal{H}_t \leftarrow \mathcal{H}_t \cup \{(t, m)\}$; $m^\prime \leftarrow m$\;
    }
  }
\caption{Efficient thinning algorithm for simulating point process}
\label{alg:thinning}
\end{algorithm}

\section{Experimental setting and baseline methods}
\label{append:baseline}

Now we describe the experiment configurations: 
We set the rank $R=5$ in our setting.
We consider the shared network that summarizes input data into a hidden embedding to be a fully connected three-layer network and the sub-network to be a fully connected network with two hidden layers.
The width of the hidden layers in the shared network is $n = 128$, and the width of the input layers in sub-networks (or the output layer in the shared network) is $p=10$.
We adopt the SoftPlus $f(x) = 1/ \log(1 + \exp(x))$ as the activation function of each layer in the network. 
To learn the model's parameters, we adopt the Adam optimizer \cite{kingma2014adam} with a constant learning rate of $10^{-2}$ and the batch size is 32. All experiments are performed on Google Colaboratory (Pro version) with 12GB RAM and dual-core Intel processors, which speed up to 2.3 GHz (without GPU). Codes to reproduce the experimental results are publicly available\footnote{\url{https://github.com/meowoodie/Neural-Spectral-Marked-Point-Processes}}. 

This study considers three baseline methods: 
(1) Standard Hawkes process with an exponentially decaying kernel function (\texttt{Hawkes}): it specifies the conditional intensity function as $\lambda(t) = \mu + \alpha \sum_{t_j < t}\beta \exp\{-\beta(t- t_j)\}$, where parameters $\mu, \alpha, \beta$ can be estimated via maximizing likelihood \citep{Hawkes1971};
(2) Recurrent marked temporal point processes (\texttt{RMTPP}): it assumes the conditional intensity function 
$\lambda(t) = \exp{\big({\boldsymbol{v}}^{\top} \boldsymbol{h}_j+\omega(t-t_j)+b\big)},$
where the $j$-th hidden state $\boldsymbol{h}_j$ in the RNN represents the history influence up to the nearest happened event $j$, and $w(t-t_j)$ represents the current influence; the $\boldsymbol{v}, \omega, b$ are trainable parameters \citep{Du2016}; and 
(3) Neural Hawkes process (\texttt{NH}): it specifies the conditional intensity function as
$\lambda^*(t) = f(\boldsymbol{\nu}^\top \boldsymbol{h}_t)$,
where $\boldsymbol{h}_t$ is the hidden state of a continuous-time LSTM up to time $t$ representing the history influence, and the $f(\cdot)$ is a SoftPlus function which ensures the positive output given any input \citep{Mei2017}.

We note that, unlike the standard Hawkes process and our \texttt{NSMPP}, \texttt{RMTPP} and \texttt{NH} do not parameterize the kernel function directly; instead, they aim to model the conditional intensity using an LSTM-based structure.
Particularly, these models pass the history information sequentially via a hidden state, where the recent memory will override the long-term memory. 
This has led \texttt{RMTPP} and \texttt{NH} to ``overemphasize" the recent events and therefore assume the temporal correlation would monotonically decrease over time. 
In addition, \texttt{RMTPP} and \texttt{NH} can only deal with one-dimensional categorical marks, while our model can be extended to high-dimensional continuous mark space. 
To ensure comparability, we only consider one- and two-dimensional event consisting of time and mark in our experiment ($d \in \{1, 2\}$). For \texttt{RMTPP} and \texttt{NH}, the event's mark will be discretized and treated as categorical input. 

\section{Additional experimental results}
\label{append:extra-results}

In this section, we present additional experiments to demonstrate the robustness of our approach by considering fitting to stationary exponentially decaying kernel and other non-stationary kernels and by varying the neural network architecture and training sample sizes.

Figure~\ref{fig:syn-1d-results-append} and Figure~\ref{fig:syn-2d-results-append} show the results on one- and two-dimensional data sets with {\it non-stationary kernels} generated by model described in Section~\ref{sec:proposed-model}. Note that compared to the Hawkes models with a stationary kernel, our model can capture the non-stationarity of the kernel and predict the conditional intensity function $\lambda$ more accurately. Besides, our proposed method also recovers the kernel structure and predicts $\lambda$ well in high-dimensional space. Note that the event points are more densely scattered in the areas with high intensity than those with much lower ones.

Figure~\ref{fig:syn-1d-results-nonstat-increased-network-append} and Figure~\ref{fig:syn-1d-results-stat-network-sampling-size} study the effect of increased model size and decreased training sample sizes to the learning of our model. 
The results of these two ablation studies show that our proposed model works consistently well without overfitting after increasing the model size or decreasing the training sample size in both the stationary and non-stationary cases.

\begin{figure}[!htb]
\centering
\begin{subfigure}[h]{1.\linewidth}
\includegraphics[width=\linewidth]{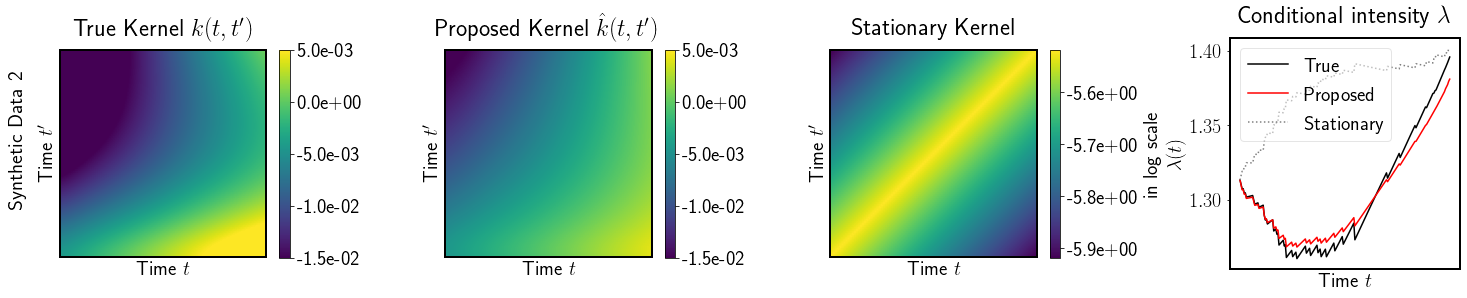}
\end{subfigure}
\begin{subfigure}[h]{1.\linewidth}
\includegraphics[width=\linewidth]{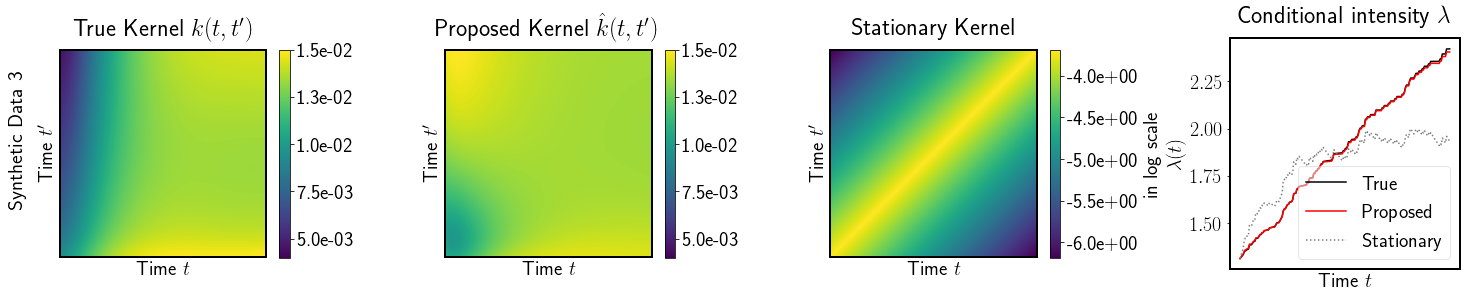}
\end{subfigure}
\caption{
Additional kernel recovery results for two other one-dimensional synthetic data sets.
The first three columns show the true kernel that generates the data, kernel learned by our model, and kernel learned by a Hawkes process, respectively. The fourth column shows the true and predicted conditional intensity functions for a test sequence. 
}
\label{fig:syn-1d-results-append}
\end{figure}

\begin{figure}[!htb]
\centering
\begin{subfigure}[h]{1.\linewidth}
\includegraphics[width=\linewidth]{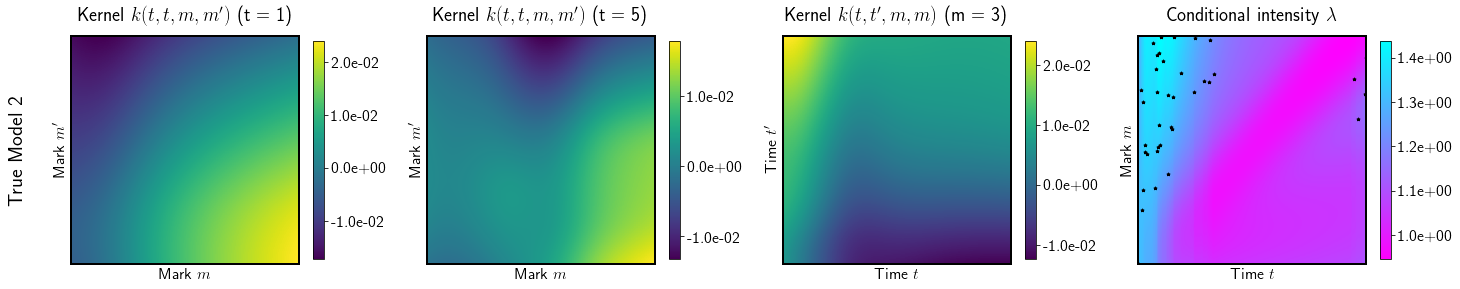}
\end{subfigure}
\begin{subfigure}[h]{1.\linewidth}
\includegraphics[width=\linewidth]{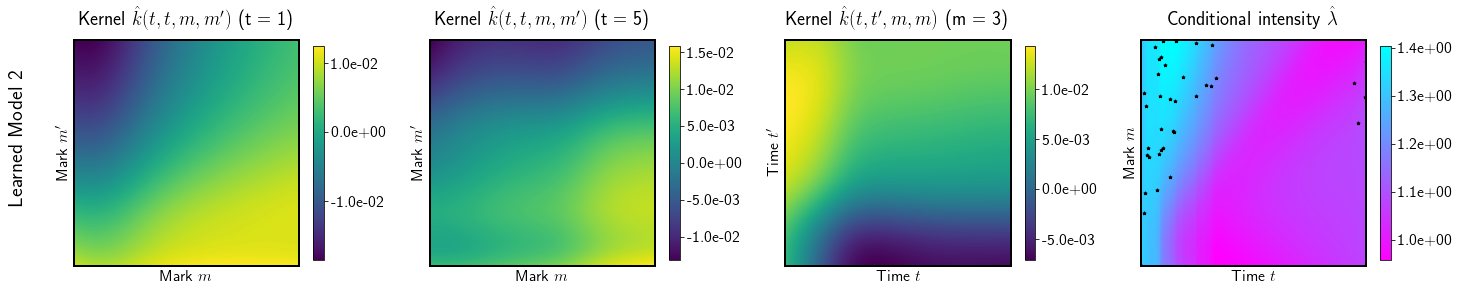}
\end{subfigure}
\begin{subfigure}[h]{1.\linewidth}
\includegraphics[width=\linewidth]{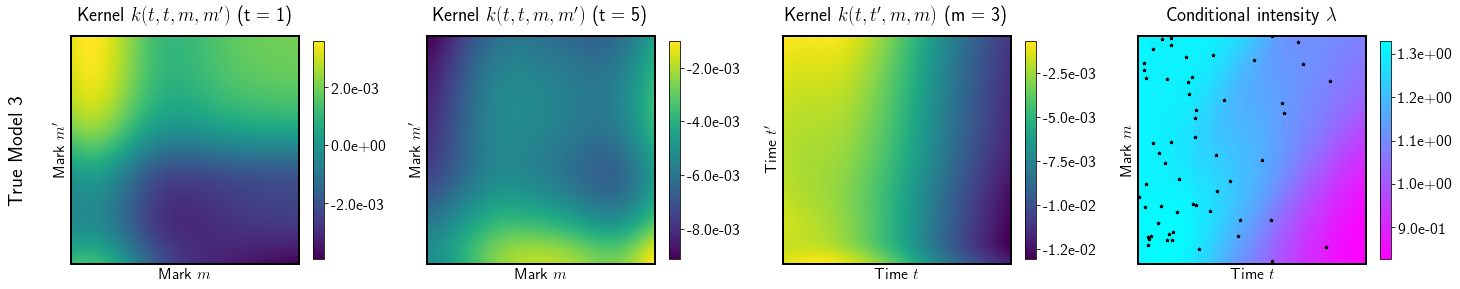}
\end{subfigure}
\begin{subfigure}[h]{1.\linewidth}
\includegraphics[width=\linewidth]{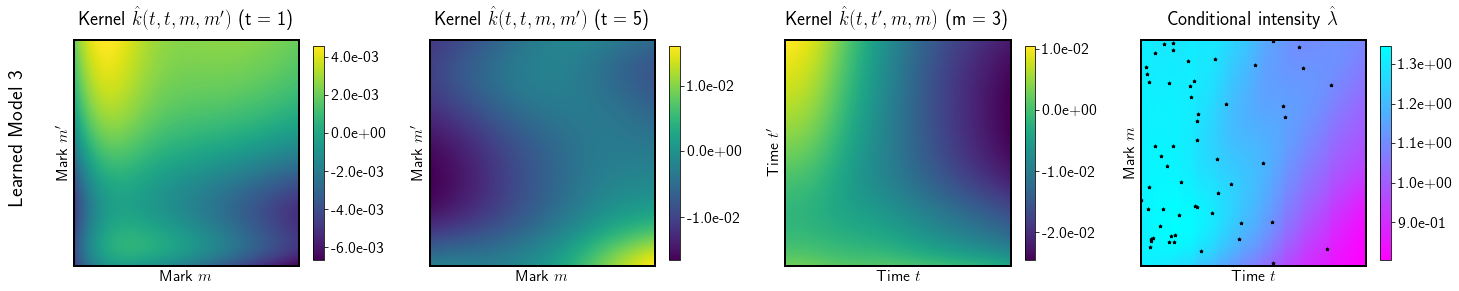}
\end{subfigure}
\caption{
Additional kernel recovery results for two other two-dimensional synthetic data sets. The first and third rows show the true models and the second and fourth rows show the learned models. The first three columns show different snapshots of kernel evaluation for each model; the last column shows their corresponding conditional intensity over marked-temporal space given a test sequence, where the black dots indicate the location of the events. 
}
\label{fig:syn-2d-results-append}
\end{figure}

\begin{figure}[!htb]
\begin{subfigure}[h]{1.\linewidth}
\includegraphics[width=\linewidth]{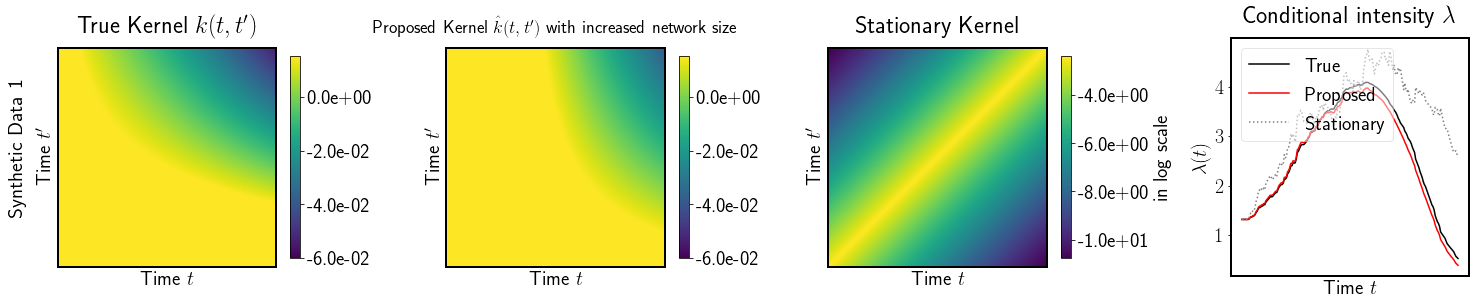}
\end{subfigure}
\begin{subfigure}[h]{1.\linewidth}
\includegraphics[width=\linewidth]{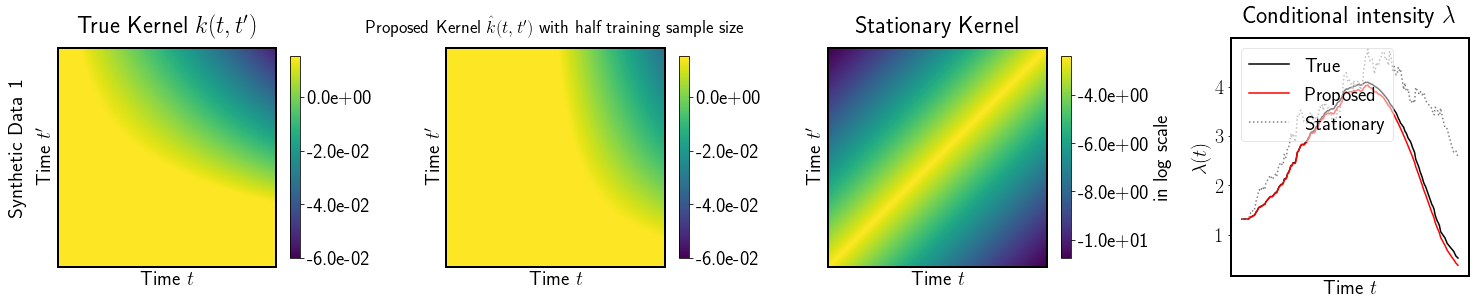}
\end{subfigure}
\caption{Ablation studies on one-dimensional synthetic data sets used in Figure~\ref{fig:syn-1d-results}. ``Proposed kernel with increased network size'' refers to the model with one more hidden layer and doubled layer width in the sub-networks ($p=20$); ``Proposed kernel with half training sample size'' refers to the model with default architecture but trained with only a half of the training samples. 
}
\label{fig:syn-1d-results-nonstat-increased-network-append}
\end{figure}

\begin{figure}[!htb]
\centering
\begin{subfigure}[h]{1.\linewidth}
\includegraphics[width=\linewidth]{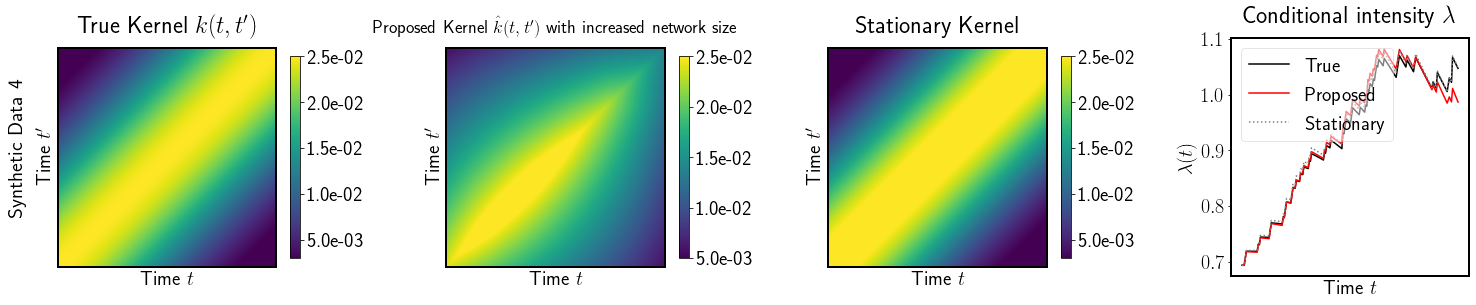}
\end{subfigure}
\begin{subfigure}[h]{1.\linewidth}
\includegraphics[width=\linewidth]{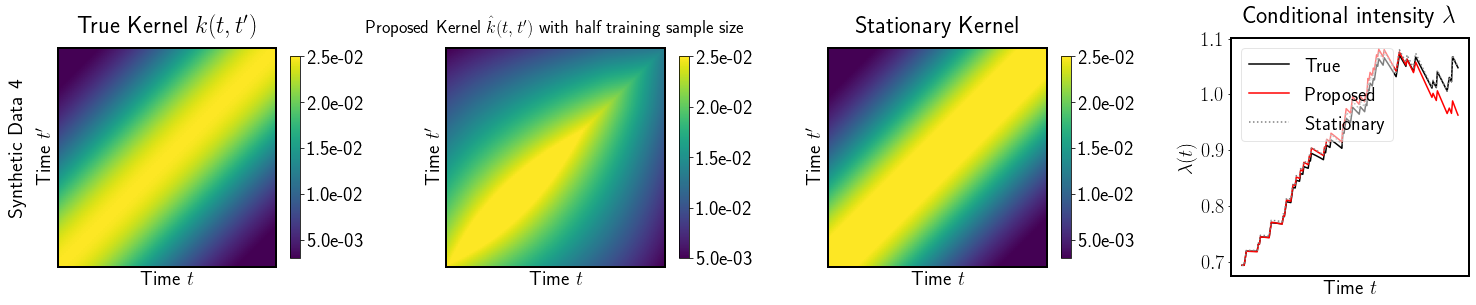}
\end{subfigure}
\caption{Ablation studies on one-dimensional synthetic data set used in Figure~\ref{fig:syn-1d-results-sanity-check}.
}
\label{fig:syn-1d-results-stat-network-sampling-size}
\end{figure}

\section{MLE guarantee under finite dimensional functional representation}\label{app:MLE_finite_dim}


We notice that in practice, the kernel function is usually restricted to some function family space for the problem to be well-defined (given a finite number of events
). In this section, we study such a set-up, where the target feature function belongs to a certain parametric function class with a finite number of parameters. This includes, for instance, spline functions, functions represented by Fourier basis, and by neural networks -- the main interest of this paper. We are particularly interested in neural networks due to its strong expressive power of functional representation.


We start from a general framework of feature function basis representation. 
Assume that the feature functions of the kernel can be well-approximated by  a linear combination of basis functions: $b_i(x): \mathcal X \rightarrow \mathbb R$, $i = 1, \ldots, S$:
$$
\psi_r(x) = \sum_{i=1}^S\alpha_{ri} b_i(x), \quad 
\phi_r(x) = \sum_{i=1}^S\beta_{ri} b_i(x),\quad r=1,\cdots,R.
$$
Then the kernel function in (\ref{kernel}) can be written as
$
k_A(x',x) = b(x')^TAb(x),
$
where $b(x) = (b_1(x),\cdots,b_S(x))^T$, and the $(p,q)$-th entry of the matrix $A$ is given by
\[
A_{pq} = \sum_{r=1}^R\nu_r \alpha_{rp}\beta_{rq}.
\]
Here we assume each matrix $A$ corresponds to a unique model, i.e. for any $A'\neq A$, there exists $x',x\in \calX$ with $t(x')<t(x),$ $k_A(x',x)\neq k_{A'}(x',x)$. 
Under such a parametrization, we can write the intensity function in (\ref{eq:kernel_intensity}) as
\begin{equation}
\lambda_A(x) := \lambda[k_A](x)  
= \mu + \int_{\calX_{t(x)}} b(x')^T A b(x)d\N(x')
= \mu + \big<\eta(x),A\big>,
\end{equation}
which is linear in the vectorized $A$. 
Here $\big<\cdot,\cdot\big>$ is the Frobenius inner product of matrices, and $\eta(x)\in\R^{S\times S}$ is conditioned on $\calH_{t(x)}$ with the $(p,q)$-th entry
\[
\eta_{pq}(x) = \int_{\calX_{t(x)}}b_p(x')b_q(x)d\N(x'). 
\]

Now the influence kernel estimation problem has been reduced to the problem of estimating the $S$-by-$S$ matrix $A$. We will estimate the coefficient matrix $A$ by the Maximum Likelihood Estimator (MLE) in the set of low-rank matrices $\mathcal{A} = \{A\in\R^{S\times S}: k_A\in\mathcal K\}$ (recall that $\calK$ is the family of kernels which admits the finite-rank decomposition as in \eqref{kernel}. By saying $\calA$ consists of low-rank matrices, we implicitly assume that $R\ll S$). Also we denote the family of kernel functions that can be represented by the chosen basis functions as
\[
\mathcal K_{\rm finite} = \{k_A:A\in\mathcal A \}.
\]
The MLE is defined as
\[
\widehat A_{\rm MLE} = \mathop{\arg\max}_{A\in \mathcal A}\ell_A,
\]
where the log-likelihood $\ell_A$ is the corresponding variant of \eqref{eq:log-likelihood},
\[
 \ell_A = \frac{1}{M}\Bigg(\sum_{j=1}^M \int_{\calX} \log \left(\mu + \big<\eta_j(x),A\big>\right)d\N_j(x) -\sum_{j=1}^M \int_{\mathcal X} \left(\mu + \big<\eta_j(x),A\big>\right) dx\Bigg).
\]


\begin{itemize}[nosep, wide, leftmargin=*]
\item 
With orthonormal bases, the classical theory has that 
the recovery of $A$ will ensure
the recovery of the original kernel function. 
With a set of over-complete bases, $R$ is less than $S$ (the number of bases function), and then $A$ is a low-rank matrix, for which case we provide a theory for the recovery of $A$ via MLE (Theorem \ref{thm:low-rank-mle}).

\item More generally, the above framework contains other constructions of $b_i(x)$'s.
For example, if $b_i(x)$ are random features, then $\alpha_{ri}$ and $\beta_{ri}$ can be viewed as weights to combine features. 
Random feature model can be naturally viewed as a neural network with one hidden layer, 
that is, $b_i(x) = \sigma (w_i^T x)$ is the activation on the $i$-th hidden nodes,
and $\sum_i \alpha_i b_i(x)$ gives the second layer output function. Here, $w_i$ are weights in the first layer, and linear combination coefficients $\alpha_i$ are weights in the second layer.  Thus, the random feature model corresponds to only training the second layer weights, leaving the 1st layer as randomly initialized.  In this case, $S$ is the number of hidden neurons, and
$R$ can be interpreted as the number of heads in an attention model.


\end{itemize}

\noindent
{\bf A few consequences.} We derive a few necessary basic results based on the model parametrization for presenting the results.
Recall that $\eta(x)\in\R^{S\times S}$ depends only on data and basis 
but not the coefficient matrix $A$.
\begin{itemize}[nosep, wide, leftmargin=*]
\item For each trajectory $j$, the intensity under parameter $A$ at $x$ has partial derivative
$$
\frac{\partial \lambda_{A,j}(x)}{\partial A_{pq}} = \eta_{pq,j}(x).
$$
\item The score function, i.e., the partial gradient of the log-likelihood function with respect to the coefficient matrix $A$, is given by
\[
\frac{\partial \ell_A}{\partial A_{pq}} = \frac{1}{M}\sum_{j=1}^M\int_{\mathcal X} \lambda_{A,j}(x)^{-1} 
\eta_{pq,j}(x) (d \mathbb N_j(x) - \lambda_{A,j}(x) dx) .
\]
\item The Hessian matrix of the log-likelihood function is given by
\[
\frac{\partial^2 \ell_A}{\partial A_{pq} \partial A_{rs}} 
=- \frac{1}{M}\sum_{j=1}^M\int_{\mathcal X} \lambda_{A,j}^{-2} (x) \eta_{pq,j}(x) \eta_{rs,j}(x) d\mathbb N_j(x).
\] 
\end{itemize}
%

Next, we provide some analysis for the MLE  with possible model misspecification, as the true kernel $k^*$ may not fall into $\mathcal K_{\rm finite}$. 

\begin{theorem}[Distance between the true kernel and the optimal fit]
\label{thm:misspecification}
Let $\widetilde A\in \mathcal A$ be the one which maximizes the expected log-likelihood function, i.e.
\begin{equation}
\widetilde A =\mathop{\arg\max}_{A\in \mathcal A} \E\left( \ell_A\right).
\label{eq:tilde_A}
\end{equation}
Under Assumption \ref{assump:B1B2}, let the $\ell_2$-norm of a kernel be
\begin{equation}
\|k\|_2^2 = \int_\calX\int_{\calX_{t(x)}}k(x',x)^2 dx'dx.
\label{eq:kernel_l2}
\end{equation}
Then we have
\[
\|k^* - k_{\widetilde A}\|_2^2 \leq \frac{c_2^5|\mathcal M|T+c_2^4}{c_1^4}\exp(2(c_2-c_1)|\calM|T)D(k^*,\mathcal K_{\rm finite})^2,
\]
where $D(k^*,\mathcal K_{\rm finite})$ is the $\ell_2$-distance between the true kernel and the set $\calK_{\rm finite}$,
\[
D(k^*,\mathcal K_{\rm finite}) = \min_{k\in\mathcal K_{\rm finite}} \|k^* - k\|_2.
\]
\end{theorem}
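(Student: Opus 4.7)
My strategy is to sandwich the expected log-likelihood gap between two quadratics in the intensity perturbation, use MLE optimality to transfer the approximation error from the best in-class kernel to $\widetilde A$, and finally translate intensity-$L^2$ distances to kernel-$L^2$ distances in the two required directions. Let $k^{\rm best} \in \mathcal K_{\rm finite}$ minimize $\|k^* - k\|_2$ over $\mathcal K_{\rm finite}$, so $\|k^* - k^{\rm best}\|_2 = D(k^*, \mathcal K_{\rm finite})$. Since $\widetilde A$ maximizes $\E[\ell_A]$ on $\mathcal A$,
\[
\E[\ell[k^*] - \ell[k_{\widetilde A}]] \le \E[\ell[k^*] - \ell[k^{\rm best}]],
\]
so it remains to lower-bound the left side by $\|k^* - k_{\widetilde A}\|_2^2$ and upper-bound the right side by $D(k^*, \mathcal K_{\rm finite})^2$, each up to the stated constants.

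For the intensity layer, writing $\delta\lambda_j^{\tilde k}(x) = \lambda_j[\tilde k](x) - \lambda_j^*(x)$ and taking expectations kills the martingale integral of $d\N_j - \lambda_j^*dx$, reducing the gap to $\frac{1}{M}\sum_j \E\!\int_\calX [\lambda_j^*\log(\lambda_j^*/\lambda_j[\tilde k]) - (\lambda_j^* - \lambda_j[\tilde k])]\,dx$. Under (A2) the integrand is $f(b) = a\log(a/b) - (a-b)$ with $a, b \in [c_1, c_2]$ and $f''(b) = a/b^2 \in [c_1/c_2^2, \, c_2/c_1^2]$, so a Taylor expansion around $b = a$ produces matching two-sided quadratic bounds
\[
\frac{c_1}{2c_2^2 M}\sum_j \E\!\int_\calX(\delta\lambda_j^{\tilde k})^2 dx \le \E[\ell[k^*] - \ell[\tilde k]] \le \frac{c_2}{2c_1^2 M}\sum_j \E\!\int_\calX(\delta\lambda_j^{\tilde k})^2 dx.
\]
Applying the lower bound at $\tilde k = k_{\widetilde A}$ and the upper bound at $\tilde k = k^{\rm best}$ yields $\frac{1}{M}\sum_j \E\!\int(\delta\lambda_j^{\widetilde A})^2 dx \le (c_2^3/c_1^3) \cdot \frac{1}{M}\sum_j \E\!\int(\delta\lambda_j^{\rm best})^2 dx$.

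The remaining task is a two-sided comparison between intensity-$L^2$ and kernel-$L^2$ norms. The upper bound on the $k^{\rm best}$ side is routine: Campbell's second-moment formula expands $\E[(\delta\lambda_j(x))^2]$ into a diagonal piece $\int\delta k(x',x)^2 \lambda^*(x') dx'$ and a cross piece $\int\!\int \delta k(u,x)\,\delta k(v,x)\,\rho_2(u,v)\,du\,dv$; using $\lambda^* \le c_2$, $\rho_2 \le c_2^2$, AM--GM on the cross term, and Cauchy--Schwarz gives $\E\!\int(\delta\lambda_j^{\rm best})^2 dx \le (c_2 + c_2^2 |\calM| T) \|k^* - k^{\rm best}\|_2^2$. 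The main obstacle is the reverse \emph{lower} bound on the $k_{\widetilde A}$ side: a naive Campbell expansion only produces $\E\!\int(\delta\lambda_j)^2 dx \ge c_1 \|\delta k\|_2^2 - c_2^2 |\calM| T \|\delta k\|_2^2$, which is vacuous once $c_2^2|\calM|T > c_1$, because the sign-indefinite cross term drowns out the diagonal.

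To overcome this I plan a Gronwall-type iteration in time. Setting $\varphi_j(t, m) = \E[(\delta\lambda_j(t, m))^2]$ and re-expressing the covariance density $\rho_2 - \lambda^*\lambda^*$ of the $k^*$-driven self-exciting process through its own past intensity fluctuations, one obtains an integral inequality of the shape $\varphi_j(t, m) \ge c_1 \int_0^t\!\!\int_\calM \delta k((s, m'), (t, m))^2 dm' ds - 2(c_2 - c_1)|\calM| \int_0^t \varphi_j(s, m)\,ds$. Gronwall's lemma applied to the time-integral of $\varphi_j$ on $[0, T]$ then delivers $\frac{1}{M}\sum_j \E\!\int(\delta\lambda_j^{\widetilde A})^2 dx \ge c_1 e^{-2(c_2 - c_1)|\calM|T} \|k^* - k_{\widetilde A}\|_2^2$ after integrating in $m$ and averaging over $j$. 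Stringing the three factors produces the prefactor $c_1^{-1} \cdot (c_2^3/c_1^3) \cdot (c_2 + c_2^2|\calM|T) = (c_2^4 + c_2^5 |\calM| T)/c_1^4$ multiplying $e^{2(c_2 - c_1)|\calM|T} D(k^*, \mathcal K_{\rm finite})^2$, matching the stated bound exactly. The Gronwall inversion---specifically, controlling the point-process covariance density in terms of past $\varphi_j$ uniformly over $\mathcal K_{\rm finite}$---is the delicate technical step; the two likelihood-to-intensity sandwiches and the upper kernel-to-intensity translation are routine bookkeeping.
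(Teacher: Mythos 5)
Your overall architecture is the same as the paper's: use optimality of $\widetilde A$ to sandwich $\E(\ell[k^*]-\ell[k_{\widetilde A}])$ between $\E(\ell[k^*]-\ell[k^{\rm best}])$ and a lower bound, then convert both sides from expected log-likelihood gaps to kernel $\ell_2$-distances. Your two-sided quadratic bounds $\frac{c_1}{2c_2^2}\E\int(\delta\lambda)^2dx \le \E(\ell[k^*]-\ell[\tilde k]) \le \frac{c_2}{2c_1^2}\E\int(\delta\lambda)^2dx$ match the paper's intermediate steps, and your upper-bound translation via the second-order product density ($\rho_2\le c_2^2$, Cauchy--Schwarz on the cross term) is sound and essentially reproduces the paper's Lemma A.3 (in fact with a slightly sharper constant, which still yields the stated bound).

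The genuine gap is the lower bound $\frac{1}{M}\sum_j\E\int(\delta\lambda_j)^2dx \ge c_1 e^{-2(c_2-c_1)|\calM|T}\|\delta k\|_2^2$, which you correctly flag as the delicate step but then only assert. The Gronwall inequality you posit, $\varphi_j(t,m)\ge c_1\int\delta k^2 - 2(c_2-c_1)|\calM|\int_0^t\varphi_j(s,m)\,ds$, does not follow from any evident manipulation: the sign-indefinite error in the Campbell expansion of $\E[(\delta\lambda(x))^2]$ comes from the covariance density $\rho_2(u,v)-\rho_1(u)\rho_1(v)$ of the process driven by $k^*$, which is entirely independent of the perturbation $\delta k$, whereas $\varphi_j(s,m)=\E[(\delta\lambda_j(s,m))^2]$ is a functional of $\delta k$. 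There is no mechanism producing a recursion of the covariance error in terms of past values of $\varphi_j$ (and the appearance of the same mark $m$ at earlier times $s$ is a further sign the inequality is reverse-engineered from the target constant). The paper resolves this differently and more simply: it lower-bounds the trajectory density pointwise, $\rho(\calH_T)\ge c_1^{|\calH_T|}e^{-c_2|\calM|T}$, which is legitimate because the integrand $\int_\calX(\delta\lambda)^2dx$ is nonnegative. Under this constant-on-each-stratum sub-density the integral over trajectories factorizes into product Lebesgue integrals, and the off-diagonal part of $(\delta\lambda(x))^2$ collapses to a perfect square $\bigl(\int_{\calX_{t(x)}}\delta k(x',x)\,dx'\bigr)^2\ge 0$ that can be discarded, leaving only the diagonal term, which sums explicitly to $c_1^2 e^{-(c_2-c_1)|\calM|T}\|\delta k\|_2^2/(2c_2^2)$ (single exponent, even better than what you need). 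You would need to either adopt this density-lower-bounding device or rigorously establish a covariance-density bound that closes your Gronwall loop; as written, the central inequality of your argument is unsupported.
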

\begin{remark}
This theorem holds true for any $k^*\in \overline \calK$ without the rank-$R$ assumption on kernels. In this case, both feature function approximation by basis and low-rank approximation of the kernel contribute to the estimation error associated with model misspecification.
\end{remark}

Next, similar to the classic asymptotic normality of the MLE under model misspecification \citep{white1982maximum}, we have the following result for the low-rank MLE.
%
%

\begin{theorem}[Asymptotic normality of low-rank MLE]\label{thm:low-rank-mle}
Under Assumption \ref{assump:B1B2}, Assumption \ref{assump:tilde_A_regular},
let the singular value decomposition of $\widetilde A$ be $\widetilde A =  U\Lambda V^T$. Let $\widetilde I$ be the expected Hessian matrix of the log-likelihood  at $\widetilde A$,
\[
\widetilde I = \E\left(\frac{\partial^2 \ell_{A}}{\partial {\rm vec}(A)\partial{\rm vec}(A)^T}\right)\bigg|_{A = \widetilde A} = -\E\left(\int_{\calX}\lambda_{\widetilde A}^{-2}{\rm vec}(\eta(x)){\rm vec}(\eta(x))^Td\N(x)\right),
\] 
where the matrices are vectorized by concatenating their columns, and let $\widetilde J$ be the covariance matrix of a single trajectory's score function at $\widetilde A$,
\begin{align*}
\widetilde{J} = &\ {\rm Cov}\left(\frac{\partial \ell_{ A}}{\partial {\rm vec}(A)},\frac{\partial \ell_{A}}{\partial {\rm vec}(A)^T}\right) \bigg|_{A = \widetilde A},
\end{align*}
$\widetilde G\in \R^{S\times S}$ be the expected score at $\widetilde A$,
\[
\widetilde G = \E\left(\frac{\ell_A}{\partial A}\right)\bigg|_{A = \widetilde A}.
\]
Let $F = (\mathbb I_S\otimes U,V\otimes \mathbb I_S)\in \R^{S^2\times 2SR}$ where $\otimes$ is the Kronecker product, $\mathbb I_S$ is the identity matrix of size $S$, 
\[
\widetilde C = (\widetilde A^\dagger\otimes \widetilde G)Q_{S,S}+((\widetilde A^\dagger\otimes \widetilde G)Q_{S,S})^T,
\]
where $\dagger$ represents pseudo-inverse and $Q_{a,b}\in\R^{ab\times ab}$ is the permutation matrix such that 
$
\vec(P^T) = Q_{a,b}\vec(P)
$ for any $a$-by-$b$ matrix $P$. If $F^T(\widetilde I+\widetilde C) F$ shares the same null-space with $F$, then the low-rank estimator $\widehat A_{\rm MLE}$, solved from the constrained maximum likelihood problem, satisfies
\[
\sqrt{M}({\rm vec}(\widehat A_{\rm MLE})-{\rm vec} (\widetilde A)) \to \calN(0,F(F^T(\widetilde I+\widetilde C)F)^\dagger F^T\widetilde JF(F^T(\widetilde I+\widetilde C)F)^\dagger F),
\]
when $M\rightarrow \infty$.
\end{theorem}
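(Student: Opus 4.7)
The plan is to apply the classical M-estimator asymptotic argument under model misspecification, adapted to the rank constraint $A \in \calA$ which realizes $\calA$ as a smooth manifold rather than an open subset of $\R^{S\times S}$. The main technical points are (i) a local parametrization of the rank-$R$ manifold near $\widetilde A$, (ii) extraction of the curvature correction $\widetilde C$ arising from the nonlinearity of the embedding, and (iii) handling the gauge redundancy of $(U,V)\mapsto UV^T$, which is what forces the pseudo-inverses into the final expression. Throughout, Assumption \ref{assump:tilde_A_regular} will supply the regularity needed for the Taylor expansions and the invertibility on the tangent space.

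\textbf{Local parametrization and first-order optimality.} Using the SVD $\widetilde A = U\Lambda V^T$, I would absorb $\Lambda^{1/2}$ into both factors to write $\widetilde A = \widetilde U\widetilde V^T$, and parametrize nearby rank-$R$ matrices as $A(\theta) = (\widetilde U+\delta U)(\widetilde V+\delta V)^T$, with $\theta := (\vec(\delta V^T);\vec(\delta U)) \in \R^{2SR}$. A direct vectorization gives
$$
\vec(A(\theta) - \widetilde A) = F\theta + \vec(\delta U\,\delta V^T),
$$
so the columns of $F$ span the tangent space of $\calA$ at $\widetilde A$ while the quadratic remainder captures its extrinsic curvature. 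Because $\widetilde A$ maximizes $\E[\ell_A]$ over $\calA$ by \eqref{eq:tilde_A}, the expected score projected onto the tangent space vanishes: $F^T\vec(\widetilde G) = 0$. This identity is the crucial lever that converts the nonzero $\widetilde G$ (which is present only because of model misspecification) into a curvature correction rather than a bias term.

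\textbf{Second-order expansion.} Taylor expanding $\ell_A$ in the ambient space and substituting the parametrization,
$$
\ell(A(\theta)) - \ell(\widetilde A) = \vec(\nabla\ell)^T\bigl(F\theta + \vec(\delta U\delta V^T)\bigr) + \tfrac{1}{2}\theta^T F^T H F\theta + O(\|\theta\|^3),
$$
where $H = \partial^2\ell/\partial\vec(A)\partial\vec(A)^T$. Taking expectation, the linear term $\vec(\widetilde G)^T F\theta$ vanishes by the first-order condition, and the only second-order contribution from the embedding's nonlinearity is $\vec(\widetilde G)^T\vec(\delta U\delta V^T)$. Rewriting $\vec(\delta U\delta V^T) = (\delta V\otimes \mathbb{I}_S)\vec(\delta U)$ and using $Q_{S,S}$ to transpose and symmetrize, together with the SVD identity that re-expresses $\delta U$ and $\delta V$ (extracted from $F\theta$) via multiplication by $\widetilde A^\dagger$, converts this bilinear form into exactly $\tfrac{1}{2}\theta^T F^T\widetilde C F\theta$ with $\widetilde C$ as stated. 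The pseudo-inverse $\widetilde A^\dagger$ appears because the $(U,V)$ parametrization is only identified up to the gauge $(U,V)\mapsto(UQ,VQ^{-T})$, so $\widetilde A^\dagger$ is the intrinsic way to invert the factorization on the manifold.

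\textbf{CLT, inversion, and main obstacle.} A multivariate CLT over the $M$ i.i.d.\ trajectories gives $\sqrt M\bigl(\vec(\nabla\ell) - \vec(\widetilde G)\bigr) \Rightarrow \calN(0,\widetilde J)$; boundedness in (A2) supplies finite variance. Combining this with the constrained first-order condition $F^T\vec(\nabla\ell(\widehat A_{\rm MLE})) = 0$ and the quadratic expansion above yields
$$
F^T(\widetilde I + \widetilde C)F\,\widehat\theta = F^T\bigl(\vec(\nabla\ell(\widetilde A)) - \vec(\widetilde G)\bigr) + o_P(M^{-1/2}).
$$
The shared-null-space hypothesis ensures that $(F^T(\widetilde I+\widetilde C)F)^\dagger$ inverts correctly on the range of $F^T$, and transforming back through $\vec(\widehat A - \widetilde A) = F\widehat\theta + o_P(M^{-1/2})$ gives the stated sandwich covariance by a standard $\delta$-method step. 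The principal obstacle will be the clean derivation of $\widetilde C$, and in particular the exact appearance of $Q_{S,S}$ and $\widetilde A^\dagger$, from the second derivative of $(U,V)\mapsto UV^T$ combined with the first-order condition; a secondary subtlety is verifying that, despite the non-identifiability of $(U,V)$ up to gauge, the resulting marginal distribution of $\vec(\widehat A)$ (the gauge-invariant quantity actually being estimated) is well-defined, which is precisely what the shared-null-space assumption buys.
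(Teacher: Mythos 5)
Your proposal is correct in substance and rests on the same pillars as the paper's proof: the sandwich (misspecified-MLE) asymptotics of \cite{white1982maximum}, a local chart of the rank-$R$ manifold around $\widetilde A$, the first-order condition $F^T\vec(\widetilde G)=0$ that kills the bias term, and a curvature correction $\widetilde C$ coming from contracting the second derivative of the embedding with the nonzero expected score $\widetilde G$. The genuine difference is the choice of chart. The paper uses a \emph{minimal}, locally bijective parametrization of dimension $(2S-R)R$ built from the SVD blocks, namely $A=(U~\overline U)\bigl(\begin{smallmatrix}\Lambda+P_1 & P_2^T\\ P_3 & P_3(\Lambda+P_1)^{-1}P_2^T\end{smallmatrix}\bigr)\bigl(\begin{smallmatrix}V^T\\ \overline V^T\end{smallmatrix}\bigr)$; this makes White's Theorem 3.2 directly applicable to $p=(P_1,P_2,P_3)$ with an invertible information matrix, after which the answer is pushed forward by the Jacobian $\partial\vec(A)/\partial p^T$ and re-expressed through $F$ via an auxiliary full-row-rank matrix $Z$. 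You instead use the factorization $(\widetilde U+\delta U)(\widetilde V+\delta V)^T$ with $2SR$ parameters, which produces $F$ and the identity $\vec(A-\widetilde A)=F\theta+\vec(\delta U\,\delta V^T)$ immediately and yields $\widetilde C$ by a cleaner computation (indeed, using $U^T\widetilde G=0$ and $\widetilde G V=0$ one checks $\vec(\widetilde G)^T\vec(\delta U\,\delta V^T)=\tfrac12 (F\theta)^T\widetilde C (F\theta)$, exactly as you assert). The price is the $R^2$-dimensional gauge redundancy $(\delta U,\delta V)\mapsto$ reparametrizations fixing $UV^T$: because of it, $F$ and $F^T(\widetilde I+\widetilde C)F$ are rank-deficient, so the classical asymptotic-normality theorem cannot be invoked verbatim for $\theta$ (its information matrix is singular by construction, not merely by bad luck); you must either fix a gauge, pass to the quotient, or --- as the paper does --- retreat to a minimal chart and only then translate the covariance into the $F$-and-pseudo-inverse form. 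Your closing remark correctly identifies this as the residual work; with that step filled in (and noting that your $F$ absorbs $\Lambda^{1/2}$ into $U,V$, an invertible column transformation under which the sandwich expression is invariant on the tangent space given the shared-null-space hypothesis), the two arguments coincide.
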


\begin{remark}[Parameter recovery guarantee] The result shows that the maximum likelihood estimate for the kernel matrix $A$ converges to the optimal fit $\widetilde A$, and the matrices $\widetilde I,\widetilde J,\widetilde C$ captures the residual variance; $F$ projects the variance caused by $\widetilde I,\widetilde J,\widetilde C$ onto the tangent space of the low-rank manifold at $\widetilde A$. In practice, the variance term can be estimated empirically; $\widetilde J$ can be estimated from the empirical covariance matrix of the score function of each trajectory, while the other matrices can be approximated by the properties of the log-likelihood function at $\widehat A_{\rm MLE}$. If the true kernel falls into the kernel family $\mathcal K_{\rm finite}$, both $\widetilde I,\widetilde J$ will equal to the Fisher Information, $\widetilde G,\widetilde C$ will vanish, which greatly simplify the process of estimating the variance. But with the presence of model misspecification, this is generally not the case.
\end{remark}

\section{Proofs}


\subsection{Proof of Lemma \ref{lemma:perturb}}


\begin{proof}[Proof of Lemma \ref{lemma:perturb}]
For the $j$-th trajectory, 
define $\ell_j[k]$  as
\begin{equation}
\label{eq:def-lj}
    \ell_j [k] 
    =
      \int_{\mathcal X}  \log \lambda_j[k](x) d \mathbb N_j(x) -  \int_{\mathcal X}  \lambda_j[k](x) dx,
\end{equation}
then $\ell [k] = \frac{1}{M} \sum_{j} \ell_j[k]$.
Let $\lambda_j^* = \lambda_j[k^*]$,
$\tilde{\lambda}_j = \lambda_j[\tilde k]$,
where the conditional intensity $\lambda[k]$ is defined as in \eqref{eq:kernel_intensity},
$k^*$ and $\tilde{k}$ are the true kernel and the perturbed one respectively. 
Then we have 
\begin{equation}\label{eq:pf-ellj-1}
\ell_j[\tilde k ] - \ell_j[k^*] 
= \int_\calX ( \log\tilde\lambda_j(x) - \log\lambda_j^*(x)  ) d\N_j (x) - \int_\calX (\tilde\lambda_j(x) - \lambda_j^*(x) )dx
\end{equation}


By \eqref{eq:kernel_intensity},  
$\lambda_j[k]$ is  linear with respect to perturbation in $k$,  that is,
\[
\lambda_j[k](x) = \nu + \int_{\calX_{t(x)}}  k(x', x) d \mathbb N_j(x').
\]
We then have that
\begin{equation}\label{eq:pf-deltalambda-1}
\tilde{\lambda}_j(x) - \lambda_j^*(x)
= \int_{\calX_{t(x)}}  (\tilde{k}(x', x)  - k^*(x', x) )d \mathbb N_j(x') = \delta \lambda_j(x),
\end{equation}
where the last equality is by definition of $\delta \lambda_j$. 

Back to \eqref{eq:pf-ellj-1}, 
the second term
 \begin{equation}\label{eq:pf-ellj-2}
\int_\calX (\tilde\lambda_j(x) - \lambda_j^*(x) )dx = \int_\calX \delta \lambda_j(x) dx;
\end{equation}
The first term under Taylor expansion of $\log$ has that, for each $x \in \calX$, 
\[
\log\tilde\lambda_j(x) - \log \lambda_j^*(x) 
= \frac{1}{ \lambda_j^*(x) } \delta  \lambda_j(x) - \frac{1}{2 \xi_j(x)^2} (\delta  \lambda_j(x))^2,
\]
where $\xi_j(x)$ takes value between $\tilde\lambda_j(x)$ and $\lambda_j^*(x) $. 
By Assumption \ref{assump:B1B2}, both  $\tilde\lambda_j(x)$ and $\lambda_j^*(x) $ are strictly positive and upper-bounded by $c_2$, and thus 
$ 0 < \xi_j(x) \le c_2$. This means that 
\[
\frac{1}{2 \xi_j(x)^2} (\delta  \lambda_j(x))^2 \ge \frac{1}{ c_2^2 }(\delta  \lambda_j(x))^2, \quad \forall x\in \calX.
\]
Thus, the 1st term in \eqref{eq:pf-ellj-1} satisfies
\begin{align}
\int_\calX ( \log\tilde\lambda_j(x) - \log\lambda_j^*(x)  ) d\N_j (x)  
& = \int_\calX \delta  \lambda_j(x)   \frac{d\N_j (x)  }{ \lambda_j^*(x) }  
 - \int_\calX \frac{1}{2 \xi_j(x)^2} (\delta  \lambda_j(x))^2  d\N_j (x)  \nonumber \\
& \le \int_\calX \delta  \lambda_j(x)   \frac{d\N_j (x)  }{ \lambda_j^*(x) }  
 -  \frac{1}{2 c_2^2} \int_\calX (\delta  \lambda_j(x))^2  d\N_j (x). 
 \label{eq:pf-ellj-3}
\end{align}
Putting together \eqref{eq:pf-ellj-2} and \eqref{eq:pf-ellj-3}, 
\begin{align*}
\ell_j[\tilde k ] - \ell_j[k^*]  
& \le 
\int_\calX \delta  \lambda_j(x)   \frac{d\N_j (x)  }{ \lambda_j^*(x) }  
 -  \frac{1}{2 c_2^2} \int_\calX (\delta  \lambda_j(x))^2  d\N_j (x)
  - \int_\calX \delta \lambda_j(x) dx \\
& =   \int_\calX \delta  \lambda_j(x)  \left(  \frac{d\N_j (x)  }{ \lambda_j^*(x) }   - dx \right)
 -  \frac{1}{2 c_2^2} \int_\calX (\delta  \lambda_j(x))^2  d\N_j (x).
\end{align*}
The above holds for each $j$,
and taking the average over the $M$ trajectories proves the lemma. 
\end{proof}

\subsection{Proof of Theorem \ref{main-thm}}
\begin{lemma}[lower bound for KL-divergence]
\label{lem:KL_lower_bound}
Under Assumption \ref{assump:B1B2}, 
\[
\E\left(\ell[k^*] - \ell[\tilde k]\right)\geq \frac{c_1^2}{2c_2^2}\exp\left(-(c_2-c_1)|\calM|T\right)\|\delta k\|_2^2,
\]
holds true for any $\tilde k\in\overline \calK$ and $\delta k = \tilde k - k^*$ (the $\ell_2$-norm is defined as \eqref{eq:kernel_l2}.)
\end{lemma}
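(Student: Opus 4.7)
\textbf{Proof plan for Lemma \ref{lem:KL_lower_bound}.}
The plan is to start from the single-trajectory inequality in Lemma \ref{lemma:perturb}, take expectations to eliminate the martingale term, and then push the remaining quadratic $\int(\delta\lambda(x))^2 d\N(x)$ all the way down to a multiple of $\|\delta k\|_2^2$ via two steps: (i) replacing $d\N$ by its compensator to get a Lebesgue integral, and (ii) a change of measure to a reference homogeneous Poisson process under which the second moment of $\delta\lambda$ admits a clean closed-form lower bound.

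\textbf{Step 1 (expectation + martingale.)} Applying Lemma \ref{lemma:perturb} with $\tilde k\in\overline\calK$ and $\delta k=\tilde k-k^*$, and taking expectation under the $k^*$-dynamics, the term
$\int_\calX \delta\lambda_j(x)\bigl(d\N_j(x)/\lambda_j[k^*](x)-dx\bigr)$
has zero expectation, since $\delta\lambda_j(x)$ is $\calH_{t(x)}$-measurable and $\E\bigl(d\N_j(x)-\lambda_j^*(x)dx \,\big|\,\calH_{t(x)}\bigr)=0$. Since the $M$ trajectories are i.i.d., averaging gives
\begin{equation*}
\E\bigl(\ell[k^*]-\ell[\tilde k]\bigr) \;\ge\; \frac{1}{2c_2^2}\,\E\!\int_\calX (\delta\lambda(x))^2 \,d\N(x).
\end{equation*}
Using $\E(d\N(x)\mid\calH_{t(x)})=\lambda^*(x)dx$ and $\lambda^*\ge c_1$ (Assumption \ref{assump:B1B2}(A2)) yields
$\E\!\int (\delta\lambda)^2 d\N \ge c_1\,\E\!\int (\delta\lambda(x))^2 dx$.

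\textbf{Step 2 (change of measure to a reference Poisson.)} Let $\P_{\mathrm{Poi}(c_1)}$ denote the law of a homogeneous Poisson process on $\calX$ with intensity $c_1$ with respect to $dx$. A standard point-process likelihood calculation gives
\begin{equation*}
\log\frac{d\P_{k^*}}{d\P_{\mathrm{Poi}(c_1)}}
\;=\; \int_\calX \log\!\frac{\lambda^*(x)}{c_1}\,d\N(x) \;-\; \int_\calX (\lambda^*(x)-c_1)\,dx.
\end{equation*}
Under Assumption \ref{assump:B1B2}(A2), the first term is nonnegative (since $\lambda^*/c_1\ge 1$) and the second is bounded above by $(c_2-c_1)|\calM|T$, hence
$d\P_{k^*}/d\P_{\mathrm{Poi}(c_1)}\ge \exp(-(c_2-c_1)|\calM|T)$ almost surely. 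Since $\int(\delta\lambda(x))^2 dx\ge 0$, this pointwise density bound transfers to expectations:
\begin{equation*}
\E_{k^*}\!\int_\calX (\delta\lambda(x))^2\,dx \;\ge\; e^{-(c_2-c_1)|\calM|T}\,\E_{\mathrm{Poi}(c_1)}\!\int_\calX (\delta\lambda(x))^2\,dx.
\end{equation*}

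\textbf{Step 3 (second moment under the Poisson.)} Under $\P_{\mathrm{Poi}(c_1)}$, the law of $d\N$ on disjoint sets is independent, so by the Campbell–Mecke variance formula,
$\mathrm{Var}_{\mathrm{Poi}(c_1)}(\delta\lambda(x)) = c_1\int_{\calX_{t(x)}} \delta k(x',x)^2\,dx'$. Since $\E(\delta\lambda(x))^2\ge \mathrm{Var}(\delta\lambda(x))$, Fubini gives
\begin{equation*}
\E_{\mathrm{Poi}(c_1)}\!\int_\calX (\delta\lambda(x))^2\,dx \;\ge\; c_1\int_\calX\!\int_{\calX_{t(x)}} \delta k(x',x)^2 \,dx'dx \;=\; c_1\,\|\delta k\|_2^2.
\end{equation*}
Chaining steps 1–3 yields exactly the claimed bound
$\E(\ell[k^*]-\ell[\tilde k])\ge \frac{c_1^2}{2c_2^2}\exp(-(c_2-c_1)|\calM|T)\,\|\delta k\|_2^2$.

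\textbf{Main obstacle.} The technical crux is Step 2: establishing the almost-sure lower bound on the Radon–Nikodym derivative with the correct exponent. Once this is in place, the other steps are routine (the martingale cancellation in Step 1 and the Poisson variance formula in Step 3 are both standard). A secondary subtlety is checking that $\delta\lambda(x)$ is predictable with respect to $\calH_{t(x)}$ so that the martingale cancellation in Step 1 and Fubini in Step 3 are both legitimate; this is immediate because $\delta k$ is bounded (by (A1)) and $\calX_{t(x)}$ excludes $x$ itself.
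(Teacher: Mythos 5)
Your proof is correct and yields exactly the stated constant, but it packages the paper's argument into different, more standard tools, so a comparison is worthwhile. Step 1 is identical to the paper's. For the remaining bound on $\E\int_\calX(\delta\lambda(x))^2dx$, the paper works bare-hands: it lower-bounds the trajectory density pointwise by $\underline\rho(\calH_T)=\exp(|\calH_T|\log c_1-c_2|\calM|T)$, expands the expectation as a sum over the spaces $\calE_i$ of $i$-event trajectories, writes $(\delta\lambda(x))^2$ as a double integral against $d\N(x')d\N(x'')$, splits into the off-diagonal part (a nonnegative squared integral, discarded) and the diagonal part, and evaluates the resulting series to get the factor $c_1\exp(-(c_2-c_1)|\calM|T)$. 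Your Steps 2--3 are precisely this computation in disguise: $\underline\rho$ equals $e^{-(c_2-c_1)|\calM|T}$ times the Poisson$(c_1)$ density, so the paper's pointwise density bound \emph{is} your almost-sure Radon--Nikodym bound; and the diagonal/off-diagonal split \emph{is} the decomposition $\E X^2=\mathrm{Var}(X)+(\E X)^2$ underlying Campbell's formula, with the discarded off-diagonal term being $(\E X)^2$. What your route buys is brevity and conceptual clarity (no series manipulation, and it makes transparent why the exponential factor appears); what it costs is that you must import the point-process likelihood-ratio formula $\log(d\P_{k^*}/d\P_{\mathrm{Poi}(c_1)})=\int_\calX\log(\lambda^*(x)/c_1)\,d\N(x)-\int_\calX(\lambda^*(x)-c_1)\,dx$ as a black box, together with mutual absolute continuity of the two laws --- both justified here by $c_1\le\lambda^*\le c_2$, but these are exactly the facts the paper's explicit density expansion establishes from scratch. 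Your closing remarks on predictability of $\delta\lambda$ and on Tonelli are correct and address the only measure-theoretic subtleties.
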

\begin{proof}
By Lemma \ref{lemma:perturb}, since all trajectories are i.i.d., and for each trajectory the conditional expectation
\[
\E\left(\frac{d\N(x)}{\lambda[k^*](x)}\bigg|\calH_{t(x)}\right) = dx,
\]
there is
\begin{align*}
\E\left(\ell[k^*] - \ell[\tilde k]\right) \geq&\   \E\left(\int_\calX \delta\lambda(x)\left(\frac{d\N(x)}{\lambda[k^*](x)}-dx\right) + \frac{1}{2c_2^2}\int_{\calX} (\delta\lambda(x))^2 d\N(x)\right)\\
 = &\ \frac{1}{2c_2^2}\E\left(\int_{\calX} (\delta\lambda(x))^2 d\N(x)\right).
\end{align*}
Following Assumption \ref{assump:B1B2} 
\[
    \frac{1}{2c_2^2}\E\left(\int_{\calX} (\delta\lambda(x))^2 d\N(x)\right) = \frac{1}{2c_2^2}\E\left(\int_{\calX} (\delta\lambda(x))^2 \lambda[k^*](x)dx\right)\geq \frac{c_1}{2c_2^2}\E\left(\int_{\calX} (\delta\lambda(x))^2 dx\right).
\]
Next we lower bound the term above by taking the integral over the event space $\calE = \bigsqcup_{i=0}^\infty \calE_i$ of trajectories, where for each $i$,
\[
\calE_i  = \{(x_1,x_2,\cdots,x_i)\in \calX^i, t(x_1)<\cdots<t(x_i)\}\subset \calX^i
\]
consists of all the trajectories with exactly $i$ events. For each $\calH_T\in\calE$, let $\N$ be the associated counting measure, the probability density of $\calH_T$
$$
\rho(\calH_T) = \exp\left(\int_{\calX}\log\lambda[k^*] d\N(x) - \int_{\calX}\lambda[k^*](x)dx\right)\geq \exp(|\calH_T|\log c_1 - |\calM|Tc_2) =: \underline\rho(\calH_T).
$$
$\E\left(\int_{\calX} (\delta\lambda(x))^2 dx\right)$ can be lower bounded by taking the integral over the lower bound of probability density $\underline\rho$,
$$
\frac{c_1}{2c_2^2}\E\left(\int_{\calX} (\delta\lambda(x))^2 dx\right)
    \geq  \frac{c_1}{2c_2^2}\sum_{i=0}^\infty\int_{\calE_i}\left(\int_{\calX} (\delta\lambda(x))^2 dx\right) \underline\rho(\calH_T)dx_1dx_2\cdots dx_i,
$$
and by the equivalence over ordering of $x_1,\cdots,x_i$,
\begin{align}
    &\ \frac{c_1}{2c_2^2}\sum_{i=0}^\infty\int_{\calE_i}\left(\int_{\calX} (\delta\lambda(x))^2 dx\right) \underline\rho(\calH_T)dx_1dx_2\cdots dx_i\nonumber\\
    = &\ \frac{c_1}{2c_2^2}\sum_{i=0}^\infty\frac{1}{i!}\int_{\calX^i}\left(\int_{\calX} (\delta\lambda(x))^2 dx\right) \underline\rho(\calH_T)dx_1dx_2\cdots dx_i.\tag{\textasteriskcentered}
    \label{eq:star}
\end{align}
Then we substitute $\delta \lambda(x)$ with $\int_{\calX_{t(x)}}\delta k(x',x)d\N (x')$ and change the order of integrals, \eqref{eq:star} equals to
\begin{align*}
    = &\ \frac{c_1}{2c_2^2}\sum_{i=0}^\infty\frac{1}{i!}\int_{\calX^i}\left(\int_{\calX} \int_{\calX_{t(x)}}\int_{\calX_{t(x)}}\delta k(x',x)\delta k(x'',x)d\N(x')d\N(x'') dx\right) \underline\rho(\calH_T)dx_1dx_2\cdots dx_i\\
    =&\ \frac{c_1}{2c_2^2}\sum_{i=0}^\infty\frac{1}{i!}\int_{\calX} \int_{\calX^i}\int_{\calX_{t(x)}}\int_{\calX_{t(x)}}\delta k(x',x)\delta k(x'',x) \underline\rho(\calH_T)d\N(x')d\N(x'')dx_1dx_2\cdots dx_i dx.
\end{align*}
For $x' \neq  x''$, $d\N(x')d\N(x'') = 1$ when two of $x_1,\cdots,x_i$ equal to $x',x''$. For $x' = x''$, $d\N(x')d\N(x'') = 1$ when one of $x_1,\cdots,x_i$ equals to $x' = x''$. Again by the equivalence over ordering of $x_1,\cdots,x_i$, we assume $x' = x_1,x'' = x_2$ when $x'\neq x''$, or $x' = x'' = x_1$, \eqref{eq:star} equals to
\begingroup
\allowdisplaybreaks
\begin{align*}
  &\ \frac{c_1}{2c_2^2}\sum_{i=2}^\infty\frac{i(i-1)}{i!}\int_{\calX} \int_{\calX^{i-2}}\int_{\calX_{t(x)}}\int_{\calX_{t(x)}}\delta k(x',x)\delta k(x'',x) \underline\rho(\calH_T)dx'dx''dx_3\cdots dx_{i}dx\\
   &\ + \frac{c_1}{2c_2^2}\sum_{i=1}^\infty\frac{i}{i!}\int_{\calX}\int_{\calX^{i-1}} \int_{\calX_{t(x)}}\delta k(x',x)\delta k(x',x) \underline\rho(\calH_T)dx'dx_2\cdots dx_{i} dx
\end{align*}
For the first row, we take the integral over $x_3,\cdots,x_i$ and for the second row, we take the integral over $x_2,\cdots,x_i$, \eqref{eq:star} equals to
\begin{align*}
   =&\ \frac{c_1}{2c_2^2}\sum_{i=2}^\infty\frac{i(i-1)}{i!}\int_{\calX} \int_{\calX_{t(x)}}\int_{\calX_{t(x)}}\delta k(x',x)\delta k(x'',x)|\calM|^{i-2}T^{i-2}c_1^i \exp(-c_2|\calM|T)dx'dx''dx\\
   &\ + \frac{c_1}{2c_2^2}\sum_{i=1}^\infty\frac{i}{i!}\int_{\calX} \int_{\calX_{t(x)}}\delta k(x',x)\delta k(x',x)|\calM|^{i-1}T^{i-1}c_1^i \exp(-c_2|\calM|T)dx'dx\\
   =&\ \frac{c_1}{2c_2^2}\sum_{i=2}^\infty\frac{i(i-1)}{i!}\int_{\calX}|\calM|^{i-2}T^{i-2}c_1^i \exp(-c_2|\calM|T) \left(\int_{\calX_{t(x)}}\delta k(x',x)dx'\right)^2dx\\
   &\ + \frac{c_1}{2c_2^2}\int_{\calX} \int_{\calX_{t(x)}}\delta k(x',x)\delta k(x',x)\sum_{i=1}^\infty\frac{1}{(i-1)!}|\calM|^{i-1}T^{i-1}c_1^i \exp(-c_2|\calM|T)dx'dx
\end{align*}
Note that the first term is non-negative.
\begin{align*}
    \eqref{eq:star}\geq &\ \frac{c_1}{2c_2^2}\int_{\calX} \int_{\calX_{t(x)}}\delta k(x',x)\delta k(x',x)\sum_{i=1}^\infty\frac{1}{(i-1)!}|\calM|^{i-1}T^{i-1}c_1^i \exp(-c_2|\calM|T)dx'dx\\
   =&\ \frac{c_1^2}{2c_2^2}\int_{\calX} \int_{\calX_{t(x)}}\delta k(x',x)\delta k(x',x)\exp(-(c_2-c_1)|\calM|T)dx'dx\\
   =&\ \frac{c_1^2}{2c_2^2}\exp(-(c_2-c_1)|\calM|T)\|\delta k \|_2^2.
\end{align*}
\end{proof}
\endgroup

\begin{proof}[Proof of Theorem \ref{main-thm}]
This follows immediately from Lemma \ref{lem:KL_lower_bound}.
\end{proof}

\subsection{Proof of Theorem \ref{thm:misspecification}}
\begin{lemma} \label{lem:KL_upper_bound}
Under Assumption \ref{assump:B1B2}, for any $k\in\overline\calK$, 
\[
\E\left(\ell[k^*] - \ell[k]\right)\leq \frac{c_2^3|\calM|T + c_2^2}{2c_1^2} \exp((c_2-c_1)|\calM|T)\|k-k^*\|_2^2,
\]
\end{lemma}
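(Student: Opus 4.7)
The plan is to mirror the argument of Lemma \ref{lemma:perturb} and Lemma \ref{lem:KL_lower_bound}, but turning every inequality around to extract an upper bound rather than a lower bound. Let $\delta k = k - k^*$ and $\delta\lambda_j(x) = \int_{\calX_{t(x)}}\delta k(x',x)d\N_j(x')$ so that $\lambda_j[k](x) - \lambda_j^*(x) = \delta\lambda_j(x)$. Applying Taylor's theorem to $\log\lambda_j[k] - \log\lambda_j^*$ with a second-order remainder, and using $\xi_j(x) \ge c_1$ from Assumption \ref{assump:B1B2} to \emph{upper} bound $(\delta\lambda_j)^2/\xi_j^2$ by $(\delta\lambda_j)^2/c_1^2$, I will obtain the analogue of \eqref{eq:perturb1} with the inequality reversed:
\[
\ell_j[k^*] - \ell_j[k] \le -\int_\calX \delta\lambda_j(x)\!\left(\frac{d\N_j(x)}{\lambda_j^*(x)} - dx\right) + \frac{1}{2c_1^2}\int_\calX (\delta\lambda_j(x))^2 d\N_j(x).
\]
Taking expectation kills the martingale term (exactly as in Lemma \ref{lem:KL_lower_bound}), and using $\lambda_j^*\le c_2$ to replace $d\N$ by $c_2\,dx$ in expectation reduces the task to upper bounding $\E\!\int_\calX (\delta\lambda(x))^2 dx$ by a constant multiple of $\|\delta k\|_2^2$.

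The main technical step is this last bound, and here the strategy follows the expansion of $(\delta\lambda)^2$ into a double integral $\int\!\int \delta k(x',x)\delta k(x'',x)\,d\N(x')d\N(x'')$ used in the proof of Lemma \ref{lem:KL_lower_bound}. Writing the expectation as a sum over the event space $\calE = \sqcup_i \calE_i$, I will now replace the trajectory density $\rho(\calH_T)$ by its \emph{upper} envelope $\overline\rho(\calH_T) = \exp(|\calH_T|\log c_2 - c_1|\calM|T)$, in contrast to the lower envelope used previously. Splitting the diagonal $x'=x''$ from the off-diagonal $x'\neq x''$ and relabeling via the symmetry in $x_1,\dots,x_i$, the two contributions lead respectively to the series $\sum_{i\ge 2}\tfrac{i(i-1)}{i!}(c_2|\calM|T)^{i-2}c_2^2 = c_2^2 e^{c_2|\calM|T}$ and $\sum_{i\ge 1}\tfrac{i}{i!}(c_2|\calM|T)^{i-1}c_2 = c_2 e^{c_2|\calM|T}$, each multiplied by $e^{-c_1|\calM|T}$ from $\overline\rho$. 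So far the argument is formally dual to the lower-bound proof.

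The step I expect to be the delicate one is handling the off-diagonal (double) integral $\int_\calX\!\left(\int_{\calX_{t(x)}}\delta k(x',x)\,dx'\right)^{\!2} dx$, which has no sign and cannot be dropped as in the previous lemma. I will control it by Cauchy--Schwarz,
\[
\left(\int_{\calX_{t(x)}}\delta k(x',x)\,dx'\right)^{\!2} \le |\calX_{t(x)}|\int_{\calX_{t(x)}}(\delta k(x',x))^2 dx' \le |\calM|T\int_{\calX_{t(x)}}(\delta k(x',x))^2 dx',
\]
so that both contributions are bounded by constants times $\|\delta k\|_2^2$. Combining the two series gives
\[
\E\!\int_\calX (\delta\lambda)^2 dx \le (c_2^2|\calM|T + c_2)e^{(c_2-c_1)|\calM|T}\|\delta k\|_2^2.
\]
Multiplying by the factor $c_2$ from $d\N \le c_2\,dx$ and dividing by $2c_1^2$ yields exactly the claimed bound $\tfrac{c_2^3|\calM|T + c_2^2}{2c_1^2}e^{(c_2-c_1)|\calM|T}\|\delta k\|_2^2$. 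The only subtlety worth double-checking is the boundedness of $\lambda[k](x)$ uniformly in the perturbed kernel (needed to guarantee $\xi_j \ge c_1$ for the Taylor remainder), which is exactly provided by Assumption \ref{assump:B1B2}(A2) since $k\in\overline\calK$.
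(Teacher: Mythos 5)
Your proposal is correct and follows essentially the same route as the paper's proof: the same Taylor expansion with remainder bounded via $\bar\lambda \ge c_1$, the same martingale cancellation and $d\N \le c_2\,dx$ step in expectation, the same upper envelope $\bar\rho$ on the trajectory density with the diagonal/off-diagonal split of $(\delta\lambda)^2$, and the same Cauchy--Schwarz bound $|\calX_{t(x)}| \le |\calM|T$ on the cross term. The series evaluations and the final assembly of constants also match the paper exactly.
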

\begin{proof}
Similar to the proof of Lemma \ref{lemma:perturb} and Theorem \ref{main-thm}, for a single trajectory and any $k\in\overline \calK$ let
\[
\delta k(x',x) = k(x',x) - k^*(x',x), \forall x',x\in\calX, t(x')<t(x),
\]
\[
\delta \lambda(x) = \int_{\calX_{t(x)}}\delta k(x',x)d\N(x').
\]
There is
\begin{align*}
    \ell[k^*] - \ell[k] =&\ \int_{\calX}\delta\lambda(x)dx - \int_{\calX}\log\left(\frac{\lambda[k](x)}{\lambda[k^*](x)}\right)d\N(x)\\
    =&\ \int_{\calX}\delta\lambda(x)dx - \int_{\calX}\log\left(1+\frac{\delta\lambda(x)}{\lambda[k^*](x)}\right)d\N(x)\\
    =&\ \int_{\calX}\delta\lambda(x)\left(dx - \frac{d\N(x)}{\lambda[k^*](x)}\right) + \frac{1}{2}\frac{\delta\lambda(x)^2}{\bar \lambda(x)^2} d\N(x),
\end{align*}
for some $\bar \lambda(x)$ determined by $\calH_{t(x)}$ such that $\bar\lambda(x)$ is in between $\lambda[k^*](x)$ and $\lambda[k](x)$ for all $x\in \calX$. Then since the expectation of $dx - d\N(x)/\lambda[k^*](x)$ is 0,
\begin{align*}
    \E\left(\ell[k^*]-\ell[k]\right)= &\ \E\left(\int_\calX \frac{1}{2}\frac{\delta\lambda(x)^2}{\bar\lambda(x)^2}d\N(x)\right) \\
    =&\ \E\left(\int_\calX \frac{1}{2}\frac{\delta\lambda(x)^2}{\bar\lambda(x)^2}\lambda[k^*](x)dx\right) \\
    \leq &\ \frac{c_2}{2c_1^2} \E\left(\int_\calX\delta\lambda(x)^2dx\right).
\end{align*}
For each trajectory $\calH_T\in \calE$, the probability density of $\calH_T$ 
\[
\rho(\calH_T) = \exp\left(\int_\calX \log \lambda[k^*](x)d\N(x) - \int_\calX \lambda[k^*](x)dx\right)\leq \exp(|\calH_T|\log c_2 - |\calM|Tc_1)=:\bar\rho(\calH_T).
\]
Following similar arguments as the proof of Lemma \ref{lem:KL_lower_bound},
\begin{align*}
     \E\left(\ell[k^*]-\ell[k]\right)\leq &\ \frac{c_2}{2c_1^2} \E\left(\int_\calX\delta\lambda(x)^2dx\right)\\
     \leq &\ \frac{c_2}{2c_1^2}\sum_{i=0}^\infty\int_{\calE_i}\left(\int_\calX\delta\lambda(x)^2dx\right)\bar\rho(\calH_T)dx_1\cdots dx_i\\
     =&\ \frac{c_2}{2c_1^2}\sum_{i=2}^\infty\frac{1}{(i-2)!} \int_\calX |\calM|^{i-2}T^{i-2}c_2^i\exp(-c_1|\calM|T)\left(\int_{\calX_{t(x)}}\delta k(x',x)dx'\right)^2 dx \\
     &\ + \frac{c_2}{2c_1^2}\int_\calX\int_{\calX_{t(x)}}\delta k(x',x)^2\sum_{i=1}^\infty \frac{1}{(i-1)!}|\calM|^{i-1}T^{i-1}c_2^i\exp(-c_1|\calM|T)dx'dx\\
     =&\ \frac{c_2}{2c_1^2} \int_\calX c_2^2\exp((c_2-c_1)|\calM|T)\left(\int_{\calX_{t(x)}}\delta k(x',x)dx'\right)^2 dx \\
     &\ + \frac{c_2}{2c_1^2}\int_\calX\int_{\calX_{t(x)}}\delta k(x',x)^2c_2\exp((c_2-c_1)|\calM|T)dx'dx.
\end{align*}

By Cauchy-Schwarz inequality, 
\begin{align*}
&\ \frac{c_2}{2c_1^2} \int_\calX c_2^2\exp((c_2-c_1)|\calM|T)\left(\int_{\calX_{t(x)}}\delta k(x',x)dx'\right)^2 dx\\
\leq&\ \frac{c_2}{2c_1^2} \int_\calX c_2^2\exp((c_2-c_1)|\calM|T)|\calX_{t(x)}|\int_{\calX_{t(x)}}\delta k(x',x)^2dx' dx\\
\leq &\ \frac{c_2}{2c_1^2} \int_\calX c_2^2\exp((c_2-c_1)|\calM|T)|\calM|T\int_{\calX_{t(x)}}\delta k(x',x)^2dx' dx\\
=&\ \frac{c_2^3|\calM|T}{2c_1^2}\exp((c_2-c_1)|\calM|T)\|\delta k\|_2^2.
\end{align*}
So together we have
\[
\E\left(\ell[k^*]-\ell[k]\right)\leq \frac{c_2^3|\calM|T + c_2^2}{2c_1^2} \exp((c_2-c_1)|\calM|T)\|k - k^*\|_2^2,
\]
\end{proof}



Then we get back to the proof of Theorem \ref{thm:misspecification}.
\begin{proof}[Proof of Theorem \ref{thm:misspecification}]
Let $A_0\in\mathcal A_R$ be the one which minimizes $\|k^* - k_{A_0}\|_2$, i.e., 
\[
\|k^* - k_{A_0}\|_2 = D(k^*,\calK_{\rm finite}).
\]
By Lemma \ref{lem:KL_lower_bound} and Lemma \ref{lem:KL_upper_bound}, there is
\begin{align*}
\frac{c_2^3|\calM|T + c_2^2}{2c_1^2} \exp((c_2-c_1)|\calM|T)D(k^*,\calK_{\rm finite})^2 \geq \E\left(\ell[k^*] - \ell[k_{A_0}]\right)\\
\geq\E\left(\ell[k^*] - \ell[k_{\widetilde A}]\right) \geq \frac{c_1^2}{2c_2^2}\exp(-(c_2-c_1)|\calM|T)\|k^* - k_{\widetilde A} \|_2^2.
\end{align*}

\end{proof}

\subsection{Proof of Theorem \ref{thm:low-rank-mle}}

Let $\overline \calA = \{A\in\R^{S\times S},k[A]\in\overline \calK\}$. We prove the theorem under the following assumption:
\begin{assumption} 
\label{assump:tilde_A_regular}
Assume $\widetilde A$ is the unique minimizer in \eqref{eq:tilde_A}, is of rank exactly $R$, and $\vec(\widetilde A)$ is on the interior of $\vec(\overline \calA) := \{\vec(A):A\in\overline\calA\}\subseteq\R^{S^2}$.
\end{assumption}


%

\begin{proof}
Consider the local parametrization of $\calA_R$ in the neighborhood of $\widetilde A$ based on the singular value decomposition $\widetilde A = U\Lambda V^T$, $U,V\in\R^{S\times R}, \Lambda$ is a $R$-by-$R$ diagonal matrix,
\[
A = (U~\overline U)\begin{pmatrix}
\Lambda + P_1& P_2^T\\
P_3 & P_3(\Lambda + P_1)^{-1}P_2^T
\end{pmatrix}\binom{V^T}{\overline V^T}.
\]
Here $(U~\overline U),(V~\overline V)$ are fixed orthogonal matrices where the first $R$ columns are $U,V$ respectively. $P_1\in \R_{R\times R}, P_2,P_3\in\R^{(S-R)\times R}$ are parameters in the neighborhood of $0$ such that $\Lambda + P_1$ is non-singular. By Theorem 3.2 of \cite{white1982maximum}, let $\widehat P_{1,\rm MLE},\widehat P_{2,\rm MLE},\widehat P_{3,\rm MLE}$ be the parameters which correspond to $\widehat A_{\rm MLE}$, i.e. \[
\widehat A_{\rm MLE} = (U~\overline U)\begin{pmatrix}
\Lambda + \widehat P_{1,\rm MLE}& \widehat P_{2,\rm MLE}^T\\
\widehat P_{3,\rm MLE} & \widehat P_{3,\rm MLE}(\Lambda + \widehat P_{1,\rm MLE})^{-1}\widehat P_{2,\rm MLE}^T
\end{pmatrix}\binom{V^T}{\overline V^T}.
\] 
For simplicity, let $p = (\vec^T(P_1),\vec^T(P_2),\vec^T(P_3))^T$. Then 
\[
\sqrt{M}\widehat p_{\rm MLE} = \sqrt{M} \begin{pmatrix}
\vec(\widehat P_{1,\rm MLE})\\\vec(\widehat P_{2,\rm MLE})\\\vec(\widehat P_{3,\rm MLE})
\end{pmatrix}
\xrightarrow{D} \mathcal N(0,I^{-1} J I^{-1}),
\]
where \begin{align*}
I =&\ \E\left(\frac{\partial^2 \ell_{A}}{\partial p\partial p^T}\right)\bigg|_{P_1 = 0,P_2 = 0,P_3 = 0},\\
J =&\ \E\left(\frac{\partial \ell_{A}}{\partial p}\frac{\partial \ell_{A}}{\partial p^T}\right)\bigg|_{P_1 = 0,P_2 = 0,P_3 = 0}.
\end{align*}

For simplicity, from now on, we write all partial derivatives taken at $P_1 = 0, P_2 = 0, P_3 = 0, A = \widetilde A$ without specifying the location. Let \[
\gamma = \vec(A-\widetilde A)  - \frac{\partial \vec(A)}{\partial p^T} p,
\]
since $\|\gamma\| = O(\|p\|_2^2)$, there is
\[
\sqrt{M}\gamma \xrightarrow{P} 0.
\]
\[
\sqrt{M}\vec(\widehat A_{\rm MLE} - \widetilde A) = \sqrt{M}\left(\gamma +\frac{\partial \vec(A)}{\partial p^T} \widehat p_{\rm MLE} \right)\xrightarrow{D} \mathcal N\left(0,\frac{\partial \vec(A)}{\partial p^T}I^{-1}JI^{-1}\frac{\partial \vec^T(A)}{\partial p}\right).
\]
Next we look at the covariance matrix of the multivariate Gaussian distribution,
\[
    \frac{\partial\ell_A}{\partial p}  =  \frac{\partial \vec^T(A)}{\partial p}\frac{\partial \ell_A}{\partial \vec(A)}.
\]
Since $\widetilde A$ maximizes $\E(\ell_A)$ and is on the interior of $\overline\calA$ and hence the interior of the manifold parametrized by $p$, we have
\begin{equation}
\label{eq:G_property}
\E\left(\frac{\partial\ell_A}{\partial p}\right) = \frac{\partial \vec^T(A)}{\partial p}\E\left(\frac{\partial\ell_A}{\partial \vec(A)}\right) = \frac{\partial \vec^T(A)}{\partial p}\vec(\widetilde G) = 0.
\end{equation}
\begin{align*}
    J =&\ \E\left(\frac{\partial \ell_A}{\partial p}\frac{\partial \ell_A}{\partial p^T}\right)\\
    =&\ \frac{\partial \vec^T(A)}{\partial p}\E\left(\frac{\partial \ell_A}{\partial \vec(A)}\frac{\partial \ell_A}{\partial \vec^T(A)}\right)\frac{\partial \vec(A)}{\partial p^T}\\
    =&\ \frac{\partial \vec^T(A)}{\partial p}\left({\rm Cov}\left(\frac{\partial\ell_A}{\partial\vec(A)},\frac{\partial\ell_A}{\partial\vec^T(A)}\right) + \E\left(\frac{\partial\ell_A}{\partial\vec(A)}\right)\E\left(\frac{\partial\ell_A}{\partial\vec(A)}\right)^T\right)\frac{\partial \vec(A)}{\partial p^T}\\
    =&\ \frac{\partial \vec^T(A)}{\partial p}~ \widetilde J ~\frac{\partial \vec(A)}{\partial p^T}.
\end{align*}
For any $i,j\in[(2S-R)R]$,
\begin{align*}
    \frac{\partial^2\ell_A}{\partial p_i\partial p_j} =&\ \frac{\partial \vec^T(A)}{\partial p_i}\frac{\partial ^2\ell_A}{\partial \vec(A)\partial p_j} + \frac{\partial \vec^T(A)}{\partial p_j\partial p_j} \frac{\partial\ell_A}{\partial\vec(A)}\\
    =&\ \frac{\partial \vec^T(A)}{\partial p_i}\frac{\partial ^2\ell_A}{\partial \vec(A)\partial\vec^T(A)}\frac{\partial \vec(A)}{\partial p_j} + \frac{\partial^2 \vec^T(A)}{\partial p_j\partial p_j} \frac{\partial\ell_A}{\partial\vec(A)},
\end{align*}
the (i,j)-th component of $I$
\begin{align*}
    I_{ij}=\E\left(\frac{\partial^2\ell_A}{\partial p_i\partial p_j}\right) = \frac{\partial \vec^T(A)}{\partial p_i}~\widetilde I~\frac{\partial \vec(A)}{\partial p_j} + \frac{\partial^2 \vec^T(A)}{\partial p_j\partial p_j}\vec(\widetilde G).
\end{align*}
We observe that some blocks of $(\frac{\partial^2 \vec^T(A)}{\partial p_j\partial p_j}\vec(\widetilde G))_{ij}$ are 0.
\begin{align*}
\frac{\partial^2 \vec^T(A)\vec(\widetilde G)}{\partial \vec(P_1)\partial \vec^T(P_1)}  =&\  \frac{\partial^2 \vec^T(UP_1V^T+ UP_2^T\overline V^T + \overline U P_3 V^T+ \overline UP_3(\Lambda+P_1)^{-1}P_2^T\overline V^T)\vec(\widetilde G)}{\partial\vec(P_1)\partial\vec^T(P_1)}=0,
\end{align*}
as (1) the second order derivative of $UP_1V^T, UP_2^T\overline V^T , \overline U P_3 V^T$ over $P_1$ is 0 since they are either linear in or irrelevant with $P_1$, (2) the second order derivative of $UP_1V^T + \overline UP_3(\Lambda+P_1)^{-1}P_2^T\overline V^T$ over $P_1$ is 0 since it is taken at $P_2 = P_3 = 0.$ Also we have
\begin{align*}
\frac{\partial^2 \vec^T(A)\vec(\widetilde G)}{\partial \vec(P_1)\partial \vec^T(P_2)}= 0,\quad &\ \frac{\partial^2 \vec^T(A)\vec(\widetilde G)}{\partial \vec(P_1)\partial \vec^T(P_3)}= 0,\\
\frac{\partial^2 \vec^T(A)\vec(\widetilde G)}{\partial \vec(P_2)\partial \vec^T(P_2)}= 0,\quad &\ \frac{\partial^2 \vec^T(A)\vec(\widetilde G)}{\partial \vec(P_3)\partial \vec^T(P_3)}= 0,
\end{align*}
for similar reasons. The only non-zero block is 
\begin{align*}
\frac{\partial^2\vec^T(A)\vec(\widetilde G)}{\partial \vec(P_2)\partial \vec^T(P_3)} =&\  \frac{\partial^2\vec^T(\overline U P_3\Lambda^{-1}P_2^T\overline V^T)\vec(\widetilde G)}{\partial \vec(P_2)\partial\vec^T(P_3)}\\
=&\ \frac{\partial}{\partial\vec(P_2)}\frac{\partial \vec^T(\overline U P_3\Lambda^{-1}P_2^T\overline V^T)\vec(\widetilde G)}{\partial \vec^T(P_3)} \\
=&\ \frac{\partial}{\partial\vec(P_2)}\frac{\partial \vec^T(P_3)(\Lambda^{-1}P_2^T\overline V^T\otimes \overline U^T)\vec(\widetilde G)}{\partial \vec^T(P_3)}\\
=&\ \frac{\partial}{\partial\vec(P_2)}((\Lambda^{-1}P_2^T\overline V^T\otimes \overline U^T)\vec(\widetilde G))^T\\
=&\ \frac{\partial}{\partial\vec(P_2)}\vec^T(\overline U^T \widetilde G \overline VP_2 \Lambda^{-1})\\
=&\ \frac{\partial}{\partial\vec(P_2)}\vec^T(P_2)(\Lambda^{-1}\otimes \overline V^T \widetilde G^T\overline U)\\
=&\ \Lambda^{-1}\otimes \overline V^T \widetilde G^T\overline U.
\end{align*}
So 
\[
I = \frac{\partial \vec^T(A)}{\partial p}~ \widetilde I~\frac{\partial\vec(A)}{\partial p^T} + \begin{pmatrix}
0 & 0 & 0 \\
0 & 0 & \Lambda^{-1}\otimes \overline V^T\widetilde G^T\overline U\\
0& \Lambda^{-1}\otimes \overline U^T\widetilde G\overline V& 0
\end{pmatrix}.
\]
At last, we compute $\partial \vec(A)/\partial p^T$,
\begin{align*}
\frac{\partial\vec(A)}{\partial p^T} = &\ \left(\frac{\partial\vec(A)}{\partial \vec^T(P_1)}~~\frac{\partial\vec(A)}{\partial \vec^T(P_2)}~~\frac{\partial\vec(A)}{\partial \vec^T(P_3)}\right)\\
=&\ \left(\frac{\partial\vec(UP_1V^T)}{\partial \vec^T(P_1)}~~\frac{\partial\vec(UP_2^T\overline V^T)}{\partial \vec^T(P_2)}~~\frac{\partial\vec(\overline U P_3V^T)}{\partial \vec^T(P_3)}\right)\\
=&\ \left(V\otimes U~~ (\overline V\otimes U)Q_{S-R,R} ~~ V\otimes \overline U\right).
\end{align*}
Let \[
Z = \begin{pmatrix}\mathbb I_{R^2} &&\\&Q_{R,S-R}&\\&& \mathbb I_{R(S-R)}\end{pmatrix}\begin{pmatrix} V^T\otimes \mathbb I_R& \mathbb I_R\otimes U^T\\  \overline V^T\otimes \mathbb I_R&\\
&\mathbb I_R\otimes \overline U^T\end{pmatrix},
\]
$Z$ has full row-rank, and 
\begin{align*}
\frac{\partial\vec(A)}{\partial p^T} Z =&\ \left(V\otimes U~~ \overline V\otimes U ~~ V\otimes \overline U\right)\begin{pmatrix} V^T\otimes \mathbb I_R& \mathbb I_R\otimes U^T\\\overline V^T\otimes \mathbb I_R&\\
&\mathbb I_R\otimes \overline U^T\end{pmatrix} \\
=&\ \left((VV^T+\overline V\overline V^T)\otimes U~~V\otimes (UU^T + \overline U\overline U^T)\right) \\
=&\ F.
\end{align*}
Then the covariance matrix of the asymptotic distribution of $\vec(\widehat A_{\rm MLE})$ 
\begin{align*}
    \frac{\partial \vec(A)}{\partial p^T}I^{-1}JI^{-1}\frac{\partial \vec^T(A)}{\partial p} = &\ \frac{\partial \vec(A)}{\partial p^T} Z (Z^T I Z)^\dagger Z^T J Z(Z^T I Z)^\dagger Z^T\frac{\partial \vec^T(A)}{\partial p}\\
    =&\ F(Z^T I Z)^\dagger F^T\widetilde J F(Z^T I Z)^\dagger F^T.
\end{align*}
Next we check that 
\[
(Z^T I Z) = F^T(\widetilde I + \widetilde C)F.
\]
By \eqref{eq:G_property}, we know
\[
\left(\vec^T(U^T\widetilde G)~~\vec^T(\widetilde GV)\right) = \vec^T(\widetilde G) F = \vec^T(\widetilde G) \frac{\partial\vec(A)}{\partial p^T}Z = 0.
\]
So 
\[
\overline U\overline U^T\widetilde G = (\mathbb I_S - UU^T)\widetilde G = \widetilde G = \widetilde G(\mathbb I_S - VV^T) = \widetilde G\overline V\overline V^T.
\]
\begin{align*}
    &\ Z^T I Z -  F^T\widetilde I F\\
=&\ Z^T\begin{pmatrix}
0 & 0 & 0 \\
0 & 0 & \Lambda^{-1}\otimes \overline V^T\widetilde G^T\overline U\\
0& \Lambda^{-1}\otimes \overline U^T\widetilde G\overline V& 0
\end{pmatrix} Z\\
=&\ \begin{pmatrix} V^T\otimes \mathbb I_R& \mathbb I_R\otimes U^T\\  \overline V^T\otimes \mathbb I_R&\\
&\mathbb I_R\otimes \overline U^T\end{pmatrix}^T\begin{pmatrix}
0 & 0 & 0 \\
0 & 0 & Q_{S-R,R}(\Lambda^{-1}\otimes \overline V^T\widetilde G^T\overline U)\\
0& (\Lambda^{-1}\otimes \overline U^T\widetilde G\overline V )Q_{R,S-R}& 0
\end{pmatrix} \\
&\ \begin{pmatrix} V^T\otimes \mathbb I_R& \mathbb I_R\otimes U^T\\  \overline V^T\otimes \mathbb I_R&\\
&\mathbb I_R\otimes \overline U^T\end{pmatrix}\\
=&\ \begin{pmatrix}0&((\mathbb I_R\otimes \overline U)(\Lambda^{-1}\otimes\overline U^T\widetilde G\overline V) Q_{R,S-R}(\overline V^T\otimes \mathbb I_R))^T\\
(\mathbb I_R\otimes \overline U)(\Lambda^{-1}\otimes\overline U^T\widetilde G\overline V) Q_{R,S-R}(\overline V^T\otimes \mathbb I_R)&0\end{pmatrix}\\
=&\ \begin{pmatrix}0&((\mathbb I_R\Lambda^{-1}\mathbb I_R\otimes \overline U \overline U^T\widetilde G\overline V\overline V^T)Q_{R,S})^T\\
(\mathbb I_R\Lambda^{-1}\mathbb I_R\otimes \overline U U^T\widetilde G\overline V\overline V^T)Q_{R,S}&0\end{pmatrix}\\
=&\ \begin{pmatrix}0&((\Lambda^{-1}\otimes \widetilde G)Q_{R,S})^T\\
(\Lambda^{-1}\otimes \widetilde G)Q_{R,S}&0\end{pmatrix}.
\end{align*}
\begin{align*}
    F^T(V\Lambda^{-1}U^T\otimes \widetilde G)Q_{S,S} F =& \binom{\mathbb I_S\otimes U^T}{V^T\otimes \mathbb I_S}(V\Lambda^{-1}U^T\otimes \widetilde G)Q_{S,S}\left(\mathbb I_S\otimes U~~ V\otimes \mathbb I_S\right)\\
    = &\ \binom{V\Lambda^{-1}U^T\otimes 0}{\Lambda^{-1}U^T\otimes \widetilde G}Q_{S,S}\left(\mathbb I_S\otimes U~~ V\otimes \mathbb I_S\right)\\
    =&\ \binom{0}{Q_{R,S}\widetilde G\otimes \Lambda^{-1}U^T}\left(\mathbb I_S\otimes U~~ V\otimes \mathbb I_S\right)\\
    =&\ \begin{pmatrix}0&0\\
    Q_{R,S}(\widetilde G\otimes \Lambda^{-1})&0\end{pmatrix}\\
    =&\ \begin{pmatrix}0&0\\
    ( \Lambda^{-1}\otimes \widetilde G)Q_{R,S}&0\end{pmatrix},
\end{align*}
\[
F^T\widetilde CF =\begin{pmatrix}0&0\\
    ( \Lambda^{-1}\otimes \widetilde G)Q_{R,S}&0\end{pmatrix} + \begin{pmatrix}0&0\\
    ( \Lambda^{-1}\otimes \widetilde G)Q_{R,S}&0\end{pmatrix}^T =  Z^TIZ - F^T\widetilde IF.
\]
\end{proof}

\end{document}